\documentclass{article}

\PassOptionsToPackage{comma, numbers, sort}{natbib}

    \usepackage[preprint]{neurips_2025}

\usepackage[utf8]{inputenc} %
\usepackage[T1]{fontenc}    %
\usepackage[colorlinks]{hyperref}
\usepackage{url}            %
\usepackage{booktabs}       %
\usepackage{amsfonts}       %
\usepackage{nicefrac}       %
\usepackage{microtype}      %
\usepackage{xcolor}         %
\usepackage{graphicx}
\usepackage{amsmath}
\usepackage{amsthm}
\usepackage{subcaption}
\usepackage{float}
\usepackage{algorithm}
\usepackage{algorithmic}
\usepackage[capitalise,noabbrev,nameinlink]{cleveref}
\usepackage{wrapfig}
\usepackage{mathtools}
\usepackage{basic_commands}
\newtheorem{lemma}{Lemma}
\newtheorem{theorem}{Theorem}
\newtheorem{remark}{Remark}

\definecolor{myred}{HTML}{D21418}
\definecolor{myblue}{HTML}{598BE7}
\definecolor{plgray}{HTML}{999999}
\newcommand{\pl}[1]{{\color{plgray} #1}}
\hypersetup{urlcolor=myred,citecolor=myred,linkcolor=myred}
\DeclarePairedDelimiterX{\infdivx}[2]{(}{)}{%
  #1\;\delimsize\|\;#2%
}
\newcommand{\kldiv}{D_{\mathrm{KL}}\infdivx}

\title{Diffusion Guidance Is a Controllable \\ Policy Improvement Operator}

\author{Kevin Frans$^*$ \\
  UC Berkeley\\
  \texttt{kvfrans@berkeley.edu} \\
  \And
  Seohong Park$^*$ \\
  UC Berkeley \\
  \And
  Pieter Abbeel \\
  UC Berkeley \\
  \And
  Sergey Levine \\
  UC Berkeley \\
}

\begin{document}

\maketitle

\begin{abstract}
  At the core of reinforcement learning is the idea of learning beyond the performance in the data. However, scaling such systems has proven notoriously tricky. In contrast, techniques from generative modeling have proven remarkably scalable and are simple to train. 
  In this work, we combine these strengths, by deriving a direct relation between policy improvement and guidance of diffusion models. The resulting framework, CFGRL, is trained with the simplicity of supervised learning, yet can further improve on the policies in the data.
  On offline RL tasks, we observe a reliable trend---increased guidance weighting leads to increased performance. Of particular importance, CFGRL can operate without explicitly learning a value function, allowing us to generalize simple supervised methods (e.g., goal-conditioned behavioral cloning) to further prioritize optimality, gaining performance for ``free'' across the board.
\end{abstract}

\section{Introduction}

\def\thefootnote{*}\footnotetext{Equal contribution.}\def\thefootnote{\arabic{footnote}}

Reinforcement learning (RL) provides a powerful framework for autonomous agents to attain strong performance by directly optimizing for task rewards. However, scaling up RL algorithms has proven notoriously challenging, particularly when using off-policy datasets. In contrast, modern generative modeling techniques have proven remarkably scalable, and have been used for related problems such as behavioral cloning \citep{diffuser_janner2022, pi0_black2024, diffusionpolicy_chi2023}. Can we leverage expressive generative modeling tools to derive simple, scalable RL approaches?

At the heart of RL is the idea of optimizing \textit{beyond the performance shown in the data.} This is especially important when training agents from offline data that may have been collected by an exploratory or otherwise suboptimal policy. On one end of the spectrum, behavioral cloning methods are simple and can leverage stable generative modeling tools like diffusion \cite{ddpm_ho2020, lipman2024flow} and flow-matching, but are only as optimal as the data. On the other end, iterative RL techniques are in principle more optimal, but in practice can suffer from hyperparameter sensitivity and instability \citep{bottleneck_park2024, cql_kumar2020, td3_fujimoto2018} that have made it challenging to scale to larger tasks.

In this work, we combine the strengths of both settings by developing a framework which is trained with the simplicity of behavioral cloning, yet can further improve on the data behaviors. We first define policies as products of two factors -- a prior reference policy, and an ``optimality'' distribution. When the optimality distribution is proportional to a monotonically increasing function of advantage, we prove that the resulting product will be an improvement over the prior. 

The key insight is that we can sample from this product distribution via techniques from diffusion modeling, and we can do so in a straightforward and controllable way. Rather than optimizing an optimality \textit{predictor}, we instead learn an equivalent optimality-\textit{conditioned} policy, as done in classifier-free guidance \citep{cfg_ho2022}. The prior and conditional factors can then by dynamically combined during sampling, allowing for a degree of policy improvement that can be controlled \textit{during test time}, without the need for retraining. 

Our framework, which we refer to as CFGRL, provides a principled and powerful connection between generative modeling and RL. Diffusion and flow-matching models already represent some of the most powerful approaches for imitation learning, but typically do not make use of guidance \citep{pi0_black2024, diffusionpolicy_chi2023}. CFGRL bridges a connection between guidance and traditional RL objectives---in fact, under certain choices, guided sampling results in a distribution that is equivalent to the solution of a KL-constrained policy improvement objective.

Particularly useful in practice, the CFGRL framework \textit{does not necessarily rely on explicitly learning a value function}. Because of this property, CFGRL can be used as a drop-in replacement in settings such as goal-conditioned behavioral cloning, unlocking further policy improvement without additional training requirements.

We experimentally show applications of CFGRL both as an offline policy extraction method (when a value function is learned), and as a generalization of goal-conditioned behavioral cloning (where we avoid learning any value networks entirely). In the former, CFGRL provides a consistent improvement over standard weighted regression methods. Scaling trends highlight how increasing the guidance term results in stronger policies up to a divergence point. In the latter, CFGRL reliably outperforms goal-conditioned behavioral cloning baselines across the board on state-based, visual, and hierarchical settings, at times increasing success rates by a factor of two.

Our contributions are twofold. First, we propose a \textit{principled connection} between diffusion model guidance and policy improvement in reinforcement learning. Second, we develop a set of simple \textit{practical algorithms} that utilize the above connection to reliably improve policies in both the offline RL and goal-conditioned behavioral cloning settings.

Code to replicate experiments is released at \url{https://github.com/kvfrans/cfgrl}.

\section{Related work}

\textbf{Offline RL.} Unlike standard RL, which involves exploring an environment, offline RL aims to learn a reward-maximizing policy solely from a previously collected dataset. The key challenge is to improve performance while preventing erroneous extrapolation when deviating too far from the dataset. Previous works target this problem from the value learning \citep{cql_kumar2020, iql_kostrikov2022, sql_xu2023, xql_garg2023, edac_an2021, sacrnd_nikulin2023}, and policy extraction \citep{brac_wu2019, td3bc_fujimoto2021, rebrac_tarasov2023, fql_park2025, awr_peng2019, awac_nair2020, sfbc_chen2023, idql_hansenestruch2023} directions. Our work most closely relates to weighted regression \citep{awr_peng2019, awac_nair2020} and return-conditioned behavioral cloning \citep{rcp_kumar2019, dt_chen2021, qdt_yamagata2023} methods (of which goal-conditioned hindsight relabeling \citep{her_andrychowicz2017, gcsl_ghosh2021, rvs_emmons2022, ocbc_eysenbach2022} is a special case), due to their emphasis on simple supervised objectives. However, we instead frame the tradeoff between regularization and policy improvement in terms of guiding a diffusion model, providing a way to \textit{control} this tradeoff during test time.

\textbf{Diffusion and flow policies for RL.}
Previous works have proposed diverse ways
to leverage the expressivity of iterative generative models,
such as diffusion~\citep{diffusion_sohl2015, ddpm_ho2020}
and flow models~\citep{flow_lipman2023, flow_liu2023, flow_albergo2023},
to enhance the capabilities of RL policies.
The main challenge with diffusion policy learning lies in \emph{how to extract}~\citep{fql_park2025}:
a diffusion policy to maximize the learned Q-function.
Prior works propose strategies
based on weighted regression~\citep{qgpo_lu2023, edp_kang2023, qvpo_ding2024, qipo_zhang2025},
reparameterized gradients~\citep{dql_wang2023, diffcps_he2023, consistencyac_ding2024, srdp_ada2024, entropydql_zhang2024, fql_park2025},
rejection sampling~\citep{sfbc_chen2023, idql_hansenestruch2023, aligniql_he2024},
and more~\citep{dipo_yang2023, acd_kuba2023, diffusiondice_mao2024, eda_chen2024, srpo_chen2024, qsm_psenka2024, dtql_chen2024, ddiffpg_li2024, parl_mark2024, dac_fang2025, dppo_ren2025}.
Our method introduces classifier-free guidance as a policy extraction mechanism.
This has multiple benefits over previous approaches:
unlike reparameterized gradient-based methods, it does not require (potentially unstable) backpropagation through time~\citep{fql_park2025};
unlike rejection sampling, it does not involve a costly sampling-then-filtering procedure.
Algorithmically, the closest work to ours is \citet{acd_kuba2023}, which also employs guidance over advantage-conditioned diffusion policies. Unlike this work, our framework supports a range of optimality functions rather than only $A=0$, and does not rely on further rejection sampling. Additionally, we do \emph{not} necessarily require an explicit value function to perform policy improvement---%
in \Cref{section:gcbc}, we further improve a goal-conditioned BC policy without additionally training value functions,
whereas all aforementioned techniques would require doing so.

\section{Preliminaries}

In the standard reinforcement learning (RL) setting, we define an environment as a Markov decision process with state space $\gS$, action space $\gA$, a transition function $p(\pl{s'} \mid \pl{s}, \pl{a}): \gS \times \gA \to \Delta(\gS)$, reward function $r(\pl{s}): \gS \to \sR$, and initial state distribution $p(\pl{s_0}) \in \Delta(\gS)$. We assume the state is fully observable. An agent is defined by a policy $\pi(\pl{a} \mid \pl{s}): \gS \to \Delta(A)$ that represents a probability distribution over actions, and together with the environment, produces a distribution of state-action trajectories $\tau = (s_0, a_0, s_1, a_1, \ldots)$. The standard RL objective is to learn a parameterized policy $\pi_\theta$ that maximizes the expected sum of future discounted rewards along such trajectories:
\begin{equation}
    J(\pi_\theta) = \mathbb{E}_{\tau \sim p(\pl{\tau} \mid \pi_\theta)}  \sum_t \gamma^t r(s_t, a_t),
    \label{equation:rl}
\end{equation}
where $p(\tau \mid \pi_\theta)$ is defined as $p(s_0) \prod_{t=0}^\infty p(s_{t+1} \mid s_t, a_t) \pi_\theta(a_t \mid s_t)$.

A \textit{policy improvement operator} is an update from
a reference policy $\hat{\pi}$ to a new policy $\pi$
such that the RL objective above does not decrease:
$J(\hat \pi) \leq J(\pi)$.
This concept is formalized via $V_{\hat{\pi}}(s)$ and $Q_{\hat{\pi}}(s,a)$, which denote the discounted expected future reward under the reference policy starting from a given state or state-action pair, respectively~\citep{rl_sutton2005}. The difference between these terms is the advantage, $A_{\hat{\pi}}(s,a) = Q_{\hat{\pi}}(s, a) - V_{\hat{\pi}}(s)$. A classic result shows that any update with non-negative advantage under the resulting state distribution, such that
\begin{equation}
   \E_{(s, a) \sim p_\pi(\pl{s}, \pl{a})} [A_{\hat{\pi}}(s,a)] \geq 0,
   \label{eq:advantage}
\end{equation}
with $p_\pi(\pl{s}, \pl{a})$ denoting the discounted state-action occupancy distribution,
results in policy improvement~\citep{trpo_schulman2015}. In practice, we require algorithms that operate over samples from a previous reference policy. Therefore, practical algorithms will approximate the above condition under the previous policy's state distribution $p_{\hat{\pi}}(\pl{s})$, and aim to maximize:
\begin{equation}
   \tilde{J}(\pi) =\E_{s \sim p_{\hat \pi}(\pl{s})}[ \E_{a \sim \pi(\pl{a} \mid s)}[ A_{\hat{\pi}}(s,a) ]].
   \label{eq:improvement}
\end{equation}
Prior work has shown that, as long as the divergence between the reference and resulting policy is bounded, the approximate objective in \cref{eq:improvement} provides a bound on the true objective in \cref{eq:advantage}, enabling monotonic incremental improvement~\citep{trpo_schulman2015}.

To account for this divergence, it is common to utilize trust-region methods during policy improvement~\citep{trpo_schulman2015, ppo_schulman2017}. While various divergence measures have been considered~\citep{dualrl_sikchi2024}, a standard choice is to penalize the KL divergence between the reference and resulting policy, resulting in the following KL-penalized RL objective as parameterized by a constant $\beta$:
\begin{equation}
    J(\pi_\theta) = \mathbb{E}_{\tau \sim p(\pl{\tau} \mid \pi_\theta)} \left[ \sum_t \gamma^t r(s_t, a_t) \right] - \beta \mathbb{E}_{s \sim p_{\pi}(\pl{s})} \left[ \kldiv{\pi_\theta(\pl{a} \mid s)}{\hat{\pi}(\pl{a} \mid s)} \right].
    \label{equation:klrl}
\end{equation}

One strategy to optimize the above objective is via iteratively applying a policy gradient \citep{pg_sutton1999}; however, such methods require on-policy samples and can have high variance. The community has instead often relied on methods that resemble supervised learning, such as weighted regression \citep{rwr_peters2007, awr_peng2019}. 
In the following sections, we introduce a policy improvement strategy based on generative modeling that maintains the simplicity of supervised learning, yet allows for controllable policy improvement.

\section{Diffusion guidance is a controllable policy improvement operator}
\label{section:operator}

\begin{figure}
    \centering
    \includegraphics[width=\textwidth]{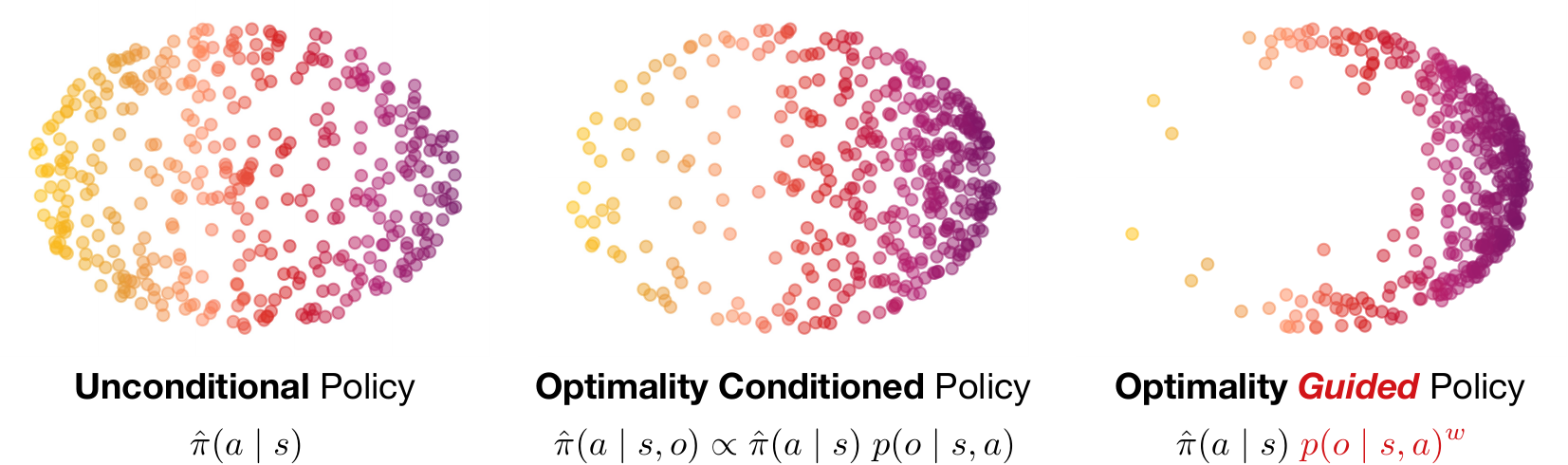}
    
    \caption{\footnotesize
    \textbf{While conditioning on optimality can create a baseline level of improvement, policies can be \textit{further} improved by attenuating this conditioning.} When $p(o \mid s,a)$ is proportional to a monotonically increasing function of advantage, then attenuation provably increases expected return, and this can be accomplished naturally with diffusion guidnace.}
    \label{fig:teaser}
\end{figure}

In this work, we establish a connection between classifier-free guidance in diffusion modeling and policy improvement in RL. We then use this relation to develop a simple framework, CFGRL, which enables us to leverage stable and scalable diffusion model training methods to enable policies that can improve over the performance in the dataset. Specifically, CFGRL allows the degree of policy improvement to be controlled at \textit{test time}, rather than having to be decided during training. 
Additionally, CFGRL does not necessarily require the use of an explicit Q-function, allowing it to be used as a drop-in replacement that further improves on methods such as goal-conditioned behavioral cloning.

\textbf{Product policies.} We begin by parameterizing policies as a product of two factors---a reference policy, and an \textit{optimality} function $f: \mathbb{R} \rightarrow \mathbb{R}$, which is conditional on advantage:
\begin{equation}
    \pi(a \mid s) \propto \hat{\pi}(a \mid s)\; f(A(s,a)).
    \label{eq:product-policy}
\end{equation}
The motivation behind this factorization is to frame improved policies in terms of a probabilistic adjustment to the current reference policy. As we will show later, diffusion guidance is naturally suited to sampling from such distributions.

We now show how product policies can be used as \textit{policy improvement operators.} When $f$ fulfills certain common criteria, the resulting policy will provably be an improvement over the reference:
\begin{remark}[\textbf{Improvement of product policies}]
If $f$ is a non-negative, monotonically increasing function of $A_{\hat{\pi}}(s,a)$, then the product $\pi(a \mid s) \propto \hat{\pi}(a \mid s) f(A(s,a))$ is an improvement over $\hat{\pi}(a \mid s)$.
\label{thm:product_improvement}
\end{remark}
\vspace{-10pt}
We provide a formalization and proof of this claim in the Appendix (\Cref{thm:reweight}),
which generalizes previous results with bandits \citep{em_dayan1997, rwr_peters2007} and exponentiated advantages \citep{awr_peng2019}.

The above finding reveals a simple path towards policy improvement. If we can sample from a properly weighted product policy, then the resulting policy will achieve a higher expected return then the reference. 

Crucially, we can control the \textit{degree} of improvement by sampling from an attenuated optimality function. To do so, we consider product policies where the optimality functions are exponentiated:
\begin{remark}[\textbf{Further improvement via attenuation}]
    Let $0 \leq w_1 < w_2$ be real numbers, and $\pi_{w_i} \propto \hat{\pi}(a \mid s) \;f(A(s,a))^{w_i}$ for $i=1, 2$. Then, $\pi_{w_2}$ is an improvement over $\pi_{w_1}$.
    \label{thm:attenuation}
\end{remark}
The proof of the above theorem is again provided in the Appendix (\Cref{thm:w_reweight}). Of course, there is no free lunch. While a higher exponent leads to an improved policy in terms of $A_{\hat{\pi}}$, the resulting policy is also further deviated from the reference. Thus, the empirical performance of an over-adjusted policy may suffer due to a distribution shift.

This tradeoff between adhering to the reference policy and maximizing return can be understood via the KL-regularized RL objective in \cref{equation:klrl}. Notably, the \textit{solutions} to the mentioned objective naturally form a set of product policies:
\begin{remark}[\textbf{KL-regularized reward-maximization results in product policies}~\citep{awr_peng2019}]
    The policies that maximize \cref{equation:klrl} under a given KL-penalty $\beta$ take the form of
    \begin{align}
        \pi(a \mid s) \propto \hat{\pi}(a \mid s) \exp(A(s,a))^{1/\beta}.
    \end{align}
\end{remark}

Equivalently, the $\beta$ term can be folded into the exponent, e.g. $\pi(a \mid s)\propto \hat{\pi}(a \mid s)\exp(A(s,a)/\beta)$. This condensed objective has been used in prior works \citep{rwr_peters2007, awr_peng2019} to directly learn $\pi$; however, in their methods the $\beta$ hyperparameter must be specified ahead of time. As shown in the next sections, we will instead develop a framework where the product factors are represented \textit{independently}, allowing their composition to be freely controlled during evaluation time.

\subsection{Composing factors via diffusion guidance}

Having understood that product policies are a natural way to induce policy improvement, we can now instantiate such policies using machinery from diffusion modeling. We start by casting the optimality function as a binary random variable\footnote{Slightly abusing notation, we abbreviate $o=1$ as $o$ when it is clear from context. $o=\varnothing$ represents an unconditional case where $o$ is not specified, i.e., the union of $o=0$ and $o=1$.} $o \in \{\varnothing,0, 1\}$ whose likelihood is defined via $f$:
\begin{equation}
    p(o \mid s,a) = f(A(s,a)) / Z(s)
\end{equation}
where $Z(s) = \int f(A(s,a'))da'$ is a state-dependent normalization factor. We will not require estimating $Z(s)$. The product policy from \cref{eq:product-policy} can now be equivalently defined as:
\begin{equation}
    \pi(a \mid s) \propto \hat{\pi}(a \mid s)\; p(o \mid s,a).
\end{equation}
Recall that diffusion models implicitly model a distribution by learning its normalized \textit{score function}, i.e., the gradient of log-likelihood under that distribution \citep{ncsn_song2019}.
Score functions have the useful property that for product distributions, they are additively composable. As such, the score of the product policy above can be represented as the sum of two factors:
\begin{equation}
    \nabla_a \log \pi(a \mid s) = \nabla_a \log \hat{\pi}(a \mid s) + \nabla_a \; \log p(o \mid s,a).
    \label{eq:factors}
\end{equation}

\textbf{Avoiding an explicit optimality predictor.} In many cases, we would rather avoid explicitly learning the $p(o \mid s,a)$ distribution. For one thing, optimality must hold a valid probability distribution, and calculating the normalization term $Z(s)$ can be tricky. Secondly, explicitly backpropagating through a neural network predictor may result in adversarial gradient attacks \citep{adversarial_goodfellow2015} especially at out-of-distribution actions \citep{iql_kostrikov2022, cql_kumar2020}. We also note that if one wanted to learn an optimality predictor, it would need to remain accurate under the support of \textit{partially-noised} actions.

Instead, we can utilize an insight from classifier-free guidance \citep{cfg_ho2022}, and use Bayes' rule to invert the optimality distribution into an optimality-\textit{conditioned} policy score function:
\begin{equation}
    \nabla_a \log \pi(a \mid s) = \nabla_a \log \hat{\pi}(a \mid s) +  (\nabla_a \log\hat{\pi}(a \mid s,o) - \nabla_a \log \hat{\pi}(a \mid s)).
    \label{eq:inversion}
\end{equation}
With the above form, both factors can be unified into a single conditional policy. We can represent both factors with the same neural network, and train via a straightforward diffusion modeling objective.

\textbf{Guidance controls the attenuation of optimality.} A key benefit of defining the product distribution in terms of composable factors is that the ratio between these factors can be dynamically controlled. Introducing a guidance weight $w$, the score function
\begin{equation}
    \nabla_a \log \hat{\pi}(a \mid s) +  w \;(\nabla_a \log\hat{\pi}(a \mid s,o) - \nabla_a \log \hat{\pi}(a \mid s)) 
\end{equation}
corresponds to the attenuated distribution
\begin{equation}
    \pi(a \mid s) \propto \hat{\pi}(a \mid s) \; p(o \mid s,a)^w, \; \text{and equivalently} \; \; \hat{\pi}(a \mid s)  f(A(s,a))^w.
\end{equation}
Recall that in \Cref{thm:attenuation}, we showed that increasing $w$ results in further policy improvement. This implies a simple yet crucial relationship---by controlling the guidance weight during sampling, we can sample from product policies that \textit{controllably} improve on the reference policy (at the cost of adherence to the prior). The tradeoff is a key hyperparameter to tune in offline RL \citep{bottleneck_park2024}, and often requires many sweeps. In contrast, with CFGRL, this sweep can be performed at test-time over a \textit{single} network, without the need for retraining.

\vspace*{-4pt}
\subsection{Training and sampling with CFGRL}
\vspace*{-4pt}
\label{section:training}

\begin{figure}
\vspace*{-4ex}
\begin{minipage}[t]{0.45\textwidth}
\begin{algorithm}[H]
\caption{CFGRL Training}
\begin{algorithmic}
   \WHILE{not converged}
        \STATE \text{Collect data, or use offline data.}
        \STATE $(s,a) \sim D$, $\textstyle a_0 \sim \mathcal{N}(0,I)$, $t \sim U(0,1)$
        \STATE \text{Label with optimality $o\in \{0, 1\}$}.
        \STATE \text{If $\mathrm{rand}() < 0.1$, set optimality $o=\varnothing$.}
        \STATE $a_t \leftarrow (1-t)\,a_0 + t\,a$
        \STATE $\textstyle\theta \leftarrow \nabla_\theta \| v_\theta(a_t, t, s, o) - (a - a_0) \|^2$
   \ENDWHILE
\end{algorithmic}
\end{algorithm}
\end{minipage}
\hfill
\begin{minipage}[t]{0.55\textwidth}
\begin{algorithm}[H]
\caption{CFGRL Sampling}
\begin{algorithmic} %
    \STATE $a \sim \mathcal{N}(0,I)$
    \STATE $t \leftarrow 0$
    \FOR{$n \in [0, \dots ,N-1]$}
        \STATE $v = (1-w) \, v_\theta(a, t, s, \varnothing) + w \, v_\theta(a, t, s, o=1)$
        \STATE $a \leftarrow a + (n/N)v$
        \STATE $t \leftarrow t + (n/N)$
    \ENDFOR  %
    \STATE \textbf{return} $a$
\end{algorithmic}
\label{alg:sampling}
\end{algorithm}
\end{minipage}
\end{figure}

We instantiate a single diffusion network to serve as both the conditional and unconditional policy. For simplicity, we adopt the flow-matching framework for training. While flow networks predict velocity rather than score, previous works have shown they retain similar properties in practice \citep{diffusionflow_gao2024, flow_lipman2024}. The policy is modeled by a velocity field $v_\theta$ conditioned on a partially-noised action $a_t$, along with a noise scale $t$, the current state $s$, as well as the previously defined \textit{optimality variable} $o \in \{\varnothing, 0, 1\}$. This network is trained via the following loss function:
\begin{equation}
    \mathcal{L}(\theta) = \mathbb{E}_{s,a \sim D} \left[ \| v_\theta(a_t, t, s,o) - (a-a_0) \|^2 \right] \qquad \text{where} \qquad a_t = (1-t)a_0 + ta,
\end{equation}
and where the noise scale $t$ is sampled uniformly between $[0,1]$, and $a_0 \sim N(0,1)$ is Gaussian noise.

\section{CFGRL improves over weighted policy extraction in offline RL}
\label{section:offline}

A common approach to offline RL is to learn a state-action value function $Q_\theta(s,a)$, and then extract a policy from this value function via a regularized policy extraction method that stays close to the behavior policy while maximize the value function. This regularization is critical to avoid out-of-distribution actions for which the value function is likely to overestimate the value~\citep{awr_peng2019, iql_kostrikov2022}. Weighted regression methods are a particularly simple class of methods for doing this, with advantage-weighted regression (AWR) or its variants being a common choice in recent work~\citep{awr_peng2019, awac_nair2020, crr_wang2020}. AWR is trained in a fully supervised manner, and does not require querying the values of non-dataset actions, with the training objective given by
\begin{equation}
    J_\mathrm{AWR}(\theta) = E_{(s,a)\sim D} \left[ \log \pi_\theta(a \mid s) \exp(A(s,a) \times (1/\beta)) \right],
\end{equation}
where $A(s,a) = Q(s,a) - V(s)$ is calculated as the difference of learned $Q_\theta(s,a)$ and $V_\theta(s)$ networks, and $\beta$ is a temperature hyperparameter.

However, the weights in AWR can become peaked, such that much of the data in training is not used effectively, resulting in a weak learning signal. This phenomenon is shown in \cref{fig:awr} (left), which plots the magnitudes of per-element gradients within an AWR batch. Notably, the magnitudes are dominated by a few outlier state-action pairs that hold particularly high weights. In this case, the rest of the batch is effectively ignored, and AWR derives the gradient only from a small subset of the available data.
\begin{wrapfigure}{r}{0.45\textwidth}
    \centering
    \vspace{10pt}
    \raisebox{0pt}[\dimexpr\height-1.0\baselineskip\relax]{
        \begin{subfigure}[t]{1.0\linewidth}
        \includegraphics[width=\linewidth]{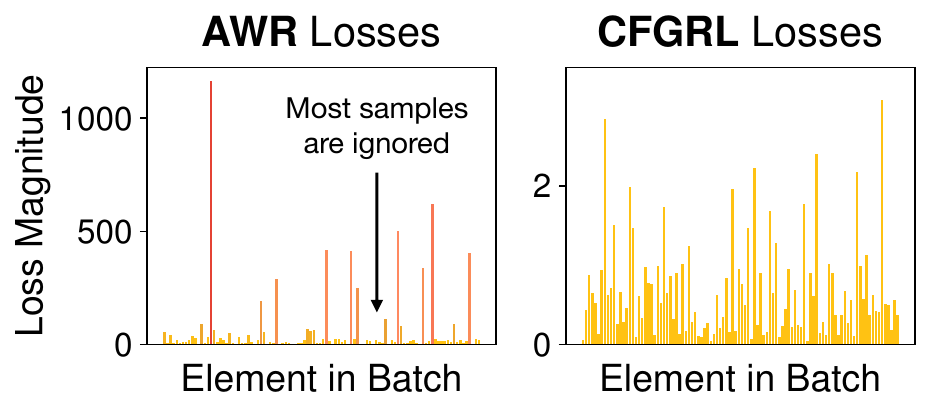}
        \end{subfigure}
    }
    \vspace{-10pt}
    \caption{
    \footnotesize
    \textbf{Weighted regression methods result in uneven gradient magnitudes within a batch.} This can limit the effective signal that each batch provides. In contrast, CFGRL uses a simple conditional diffusion modeling loss with even weighting.
    }
    \vspace{-30pt}
    \label{fig:awr}
\end{wrapfigure}

We now show how we can instead train with CFGRL to alleviate this issue. Specifically, we will instantiate CFGRL with a particularly simple optimality criteria:
\begin{equation}
    o = \begin{cases}
      1 & \text{if } A(s,a) \geq 0 \\
      0 & \text{if } A(s,a) < 0,
    \end{cases}
\end{equation}
which nicely lends itself to Bayes' inversion for \cref{eq:inversion}. Equivalently, we are assigning $f = \mathbf{1}(A \geq0)$ which is both non-negative and non-decreasing, fulfilling the criteria of \cref{thm:product_improvement}.

\begin{figure}[t!]
    \centering
    \includegraphics[width=\textwidth]{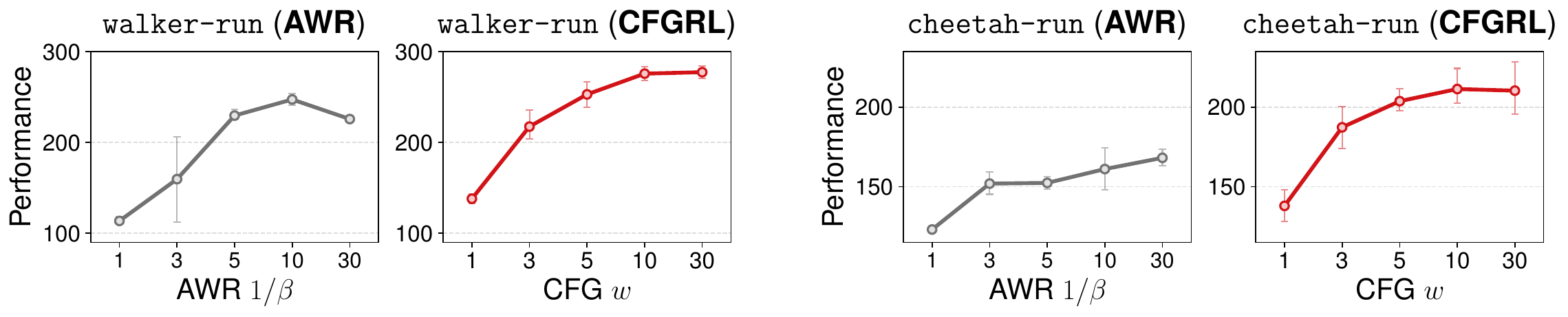}
    \vspace*{-3ex}
    \caption{\footnotesize
    \textbf{CFGRL controls the tradeoff between reference adherence and optimality by adjusting the guidance weighting.} This addresses the same motivation behind tuning the temperature in advantage-weighted regression, however, it can be tuned during \textit{test time} rather than via retraining, and empirically leads to a higher maximum performance.}
    \label{fig:temperature}
\end{figure}

\begin{table}[t!]
\label{table:aggregation}
\centering
\vspace{-10pt}
\begin{minipage}{.47\linewidth}
\centering
\captionof{table}{
\footnotesize
\textbf{ExORL results.}
\label{table:offline_exorl}
}
\vspace{5pt}
\scalebox{0.8}
{
\begin{tabular}{lcc}
\toprule
\texttt{Task} & \texttt{AWR} & \texttt{\color{myred}CFGRL} \\
\midrule

\texttt{walker-stand} & $603$ {\tiny $\pm 8$} & $\mathbf{782}$ {\tiny $\pm 8$} \\
\texttt{walker-walk} & $444$ {\tiny $\pm 4$} & $\mathbf{608}$ {\tiny $\pm 32$} \\
\texttt{walker-run} & $247$ {\tiny $\pm 10$} & $\mathbf{282}$ {\tiny $\pm 6$} \\
\texttt{quadruped-walk} & $\mathbf{776}$ {\tiny $\pm 15$} & $\mathbf{762}$ {\tiny $\pm 25$} \\
\texttt{quadruped-run} & $485$ {\tiny $\pm 7$} & $\mathbf{571}$ {\tiny $\pm 25$} \\
\texttt{cheetah-run} & $168$ {\tiny $\pm 7$} & $\mathbf{216}$ {\tiny $\pm 15$} \\
\texttt{cheetah-run-backward} & $146$ {\tiny $\pm 8$} & $\mathbf{262}$ {\tiny $\pm 26$} \\
\texttt{jaco-reach-top-right} & $33$ {\tiny $\pm 2$} & $\mathbf{72}$ {\tiny $\pm 6$} \\
\texttt{jaco-reach-top-left} & $30$ {\tiny $\pm 8$} & $\mathbf{46}$ {\tiny $\pm 6$} \\

\bottomrule
\end{tabular}
}
\end{minipage}%
\begin{minipage}{.53\linewidth}
\centering
\captionof{table}{
\footnotesize
\textbf{OGBench results.}
\label{table:offline_ogbench}
}
\vspace{5pt}
\scalebox{0.8}
{
\begin{tabular}{lcc}
\toprule
\texttt{Task} & \texttt{AWR} & \texttt{\color{myred}CFGRL} \\
\midrule

\texttt{pointmaze-large-navigate} & $70$ {\tiny $\pm 25$} & $\mathbf{100}$ {\tiny $\pm 0$} \\
\texttt{pointmaze-teleport-navigate} & $3$ {\tiny $\pm 7$} & $\mathbf{57}$ {\tiny $\pm 7$} \\
\texttt{antmaze-large-navigate} & $\mathbf{50}$ {\tiny $\pm 9$} & $20$ {\tiny $\pm 9$} \\
\texttt{antmaze-teleport-navigate} & $22$ {\tiny $\pm 19$} & $\mathbf{30}$ {\tiny $\pm 22$} \\
\texttt{humanoidmaze-large-navigate} & $\mathbf{3}$ {\tiny $\pm 4$} & $0$ {\tiny $\pm 0$} \\
\texttt{antsoccer-arena-navigate} & $7$ {\tiny $\pm 0$} & $\mathbf{20}$ {\tiny $\pm 5$} \\
\texttt{cube-single-play} & $\mathbf{85}$ {\tiny $\pm 8$} & $\mathbf{82}$ {\tiny $\pm 3$} \\
\texttt{scene-play} & $\mathbf{18}$ {\tiny $\pm 3$} & $17$ {\tiny $\pm 9$} \\
\texttt{puzzle-3x3-play} & $\mathbf{3}$ {\tiny $\pm 7$} & $\mathbf{3}$ {\tiny $\pm 4$} \\

\bottomrule
\end{tabular}
}
\end{minipage}
\end{table}

The training procedure with CFGRL is simple. Given an state-action sample $(s,a) \sim D$, we label that pair with $o \in \{0,1\}$ according the above criteria. We then train with the standard conditional diffusion-modeling loss as done in \cref{section:training}. Notably, there is \textit{no weighting term used in training.} As such, gradients within each batch remain reasonably distributed, as shown in \cref{fig:awr} (right).

There is a suggestive similarity between the temperature hyperparameter in AWR and the guidance weight in CFGRL. In fact, both these parameters play the same role---they control the tradeoff between adherence to a reference policy and maximization of rewards. However, with AWR, the temperature must be chosen beforehand and is folded into the optimization objective. CFGRL keeps the prior policy and the optimality-conditioned policy separate, and only combines them during sampling. Thus, this tradeoff can be adjusted \textit{without retraining} when using CFGRL, making it easy to find the best value.

Furthermore, we will see that the guidance term in CFGRL is empirically more effective than the temperature of AWR. Note that while the absolute scale of $w$ and $(1/\beta)$ can vary, they have a proportional relationship and share the same base case at $w=(1/\beta)=0$, in which case the resulting policy simply mimics the dataset policy. We examine the scaling of performance over different guidance and temperature values in \cref{fig:temperature}. For AWR, performance saturates around a temperature of $(1/\beta)=10$. In contrast, guidance continues to improve beyond performance beyond this point, displaying a longer-lasting trend.

\textbf{Experimental comparison.} We further establish the comparison between AWR and CFGRL on $9$ tasks from the ExORL benchmark \citep{exorl_yarats2022}, which contains data collected by an exploratory agent, along with $9$ single-task environments from the OGBench suite \citep{ogbench_park2025}. In all experiments, we use the \textit{same} state-action value function trained via implicit Q-learning for both methods \citep{iql_kostrikov2022}, which notably does not require a policy in the loop to learn and is therefore independent of the extraction method that is used downstream. For AWR, we sweep over $1/\beta$ in the set of $\{1, 3, 10, 30\}$, and for CFGRL, we sweep over $w \in \{1, 1.25, 1.5, 2.0, 3.0\}$. Results are presented in \Cref{table:offline_exorl,table:offline_ogbench}, and are averaged over $4$ seeds. On a strong majority of tasks, CFGRL achieves a better final performance than AWR. This indicates that policy extraction with CFGRL, which also corresponds to a simple generative modeling objective, is more effective than the widely used AWR method.

\section{CFGRL unlocks hidden gains in goal-conditioned behavioral cloning}
\label{section:gcbc}

While the overall CFCRL framework is general, it is particularly appealing in the special-case of goal-conditioned RL, where it is common to use a simple (though crude) approximation that bypasses the need for a learned value estimator \citep{ocbc_eysenbach2022, gcsl_ghosh2021, susie_black2024}. In such settings, the objective to find a goal-conditioned policy $\pi(\pl{a} \mid \pl{s}, \pl{g}): \gS \times \gS \to \Delta(\gA)$ that maximizes likelihood of reaching the goal:
\begin{equation}
    J(\pi) = \E_{\tau \sim p(\pl{\tau} \mid \pi),\ g \sim p(\pl{g})}\left[\sum_t \gamma^t \delta_g(s_t) \right],
    \label{eq:goal-reaching}
\end{equation}
where $p(\pl{g}) \in \Delta(\gS)$ is a goal distribution
and reward is $\delta_g$, i.e. the Dirac delta ``function'' at $g$.\footnote{This ``function'' is well-defined in a discrete state space, but requires a measure-theoretic formulation~\citep{fb_touati2021} to be well-defined in a continuous state space, which we omit for simplicity.}
While in principle, we can optimize the above objective with a full RL procedure, an often-used simplification is to perform goal-conditioned \textit{behavioral cloning} (GCBC), which maximizes:
\begin{equation}
    J_\mathrm{GCBC}(\theta) = \E_{\substack{(s_t, a_t) \sim \gD,\ \Delta \sim \mathrm{Geom}(1-\gamma)}}[\log \pi_\theta(a_t \mid s_t, s_{t+\Delta})], \label{eq:gcbc_obj}
\end{equation}
where $\mathrm{Geom}(1-\gamma)$ denotes the Geometric distribution with parameter $1 - \gamma$,
and we often denote $s_{t+\Delta}$ as $g$.
GCBC can be seen as a special case of conditional behavioral cloning methods that filter for actions that empirically reach a future goal.
While the GCBC objective above is simple,
it does not converge to the optimal goal-reaching policy, especially when the dataset is suboptimal~\citep{ocbc_eysenbach2022, ocbc_ghugare2024}

\textbf{Generalizing past na\"ive GCBC.}
Based on our CFGRL framework in \Cref{section:operator},
we now introduce a method to further improve GCBC policies
\emph{without training value functions}.
The key insight is that, since CFGRL enables one step of policy improvement over the base policy,
applying CFGRL with guidance on the goal $g$ will produce a policy that is better than the standard GCBC policy.
While this policy is still not as optimal as the full GCRL solution (which in general would require many steps of policy improvement),
prior work has shown that even one step of policy improvement can lead to significant performance gains in a range of RL settings~\citep{onestep_brandfonbrener2021, crl_eysenbach2022}.
Indeed, our experiments will confirm that CFGRL enables a significant increase in performance over standard GCBC.

We begin by noting that the GCBC policy that maximizes \Cref{eq:gcbc_obj} is given as follows~\citep{crl_eysenbach2022}:
\begin{equation}
    \pi(a \mid s, g) = \frac{\hat \pi(a \mid s)p^\gamma(g \mid s, a)}{p^\gamma(g \mid s)} \propto  \hat \pi(a \mid s) \; Q_{\hat \pi}(s, a, g), \label{eq:gcbc-factored}
\end{equation}
where $p^\gamma$ denotes the distribution induced by the Geometric goal sampling procedure.

Next, we note that the second factor in \Cref{eq:gcbc-factored} satisfies the conditions of \cref{thm:product_improvement} (i.e., a bounded, non-negative, non-decreasing function in $A_{\hat{\pi}}(s, a)$).\footnote{Specifically, we set $f$ as $Q_{\hat \pi}(s, a, g)$, which is a bounded, non-negative, non-decreasing function of $A_{\hat \pi}(s, a, g)$. This can also be understood as defining $p(o \mid g,s,a) \propto p^\gamma(g \mid s,a)$, i.e., an action's optimality is proportional to the likelihood of reaching the goal in the discounted future.}
Thus, we can invoke \cref{thm:attenuation} to achieve a policy improvement.
Specifically,
the resulting policy $\pi(a \mid s) \propto \hat{\pi}(a \mid s) \; p(g \mid s,a)^w$ under any exponent $w \geq 1$ will result in a policy improvement, and we can sample from the attenuated second factor via guidance:
\begin{equation}
\nabla_a \log \hat{\pi}(a \mid s) +  w \;(\nabla_a \log \pi(a \mid s,g) - \nabla_a \log \hat{\pi}(a \mid s)).
\label{eq:gcbc-guidance}
\end{equation}
The above CFGRL interpretation reveals a simple recipe for improving beyond the GCBC policy. Specifically, naive GCBC results in a product policy that implicitly assumes a weighting of $w=1$. Guidance allows us to instead consider $w>1$, leading to improved performance.

We emphasize the practical benefits of this setup---improvement can be gained for ``free'' relative to a standard GCBC setup. The components of \cref{eq:gcbc-guidance} are simply the original goal-conditioned BC policy $\pi(\pl{a} \mid \pl{s}, \pl{g})$ along with an unconditional BC policy $\hat{\pi}(\pl{a} \mid \pl{s})$. We do \emph{not} require any additional techniques such as training an explicit value function, or sampling further on-policy actions.

\subsection{Experimental results}

\begin{figure}
    \centering
    \includegraphics[width=\textwidth]{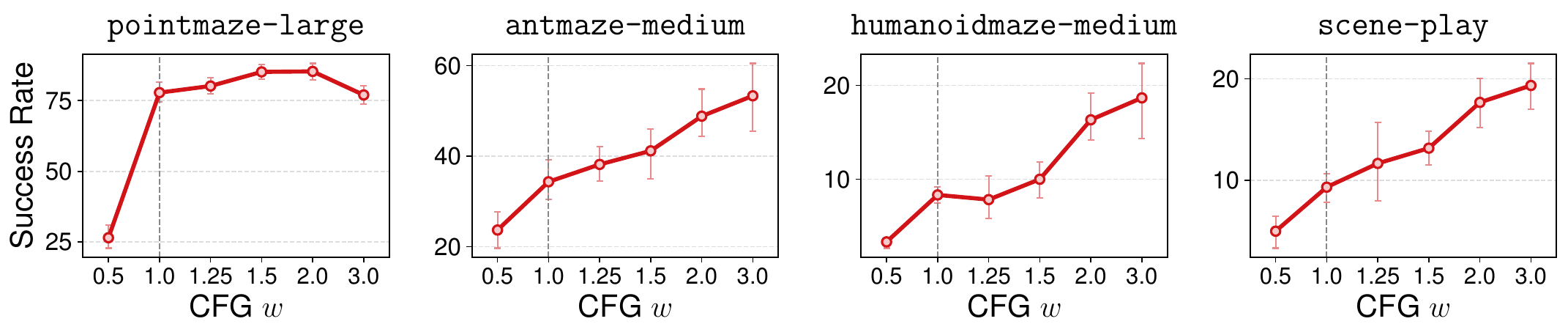}
    \vspace*{-4ex}
    \caption{
    \footnotesize
    \textbf{CFGRL can extrapolate \textit{beyond} the GCBC policy, unlocking further performance gains.} \newline In fact, GCBC is implicitly a special case of the CFGRL policy where $w=1$. By instead considering $w>1$, the resulting policy is an improvement over the original. We show that performance steadily increases with $w$ on a range of environments.
    }
    \vspace{-10pt}
    \label{fig:gcbc}
\end{figure}

\textbf{Tasks.}
We empirically verify effectiveness over $17$ state-based and $7$ pixel-based goal-conditioned RL tasks from the OGBench task suite~\citep{ogbench_park2025} (\Cref{fig:ogbench}).
These tasks span a variety of robotic navigation and manipulation domains,
including whole-body humanoid control, maze navigation, sequential object manipulation, and combinatorial puzzle solving.
Among them, the tasks prefixed with ``\texttt{visual-}'' require image-based control.

\textbf{Methods.}
In this experiment, we consider four imitation learning baselines that do not involve value learning
(recall that CFGRL also does not train a value function):
(1) BC, (2) flow BC~\citep{diffusionpolicy_chi2023}, (3) goal-conditioned BC (GCBC)~\citep{gcsl_ghosh2021}, and (4) flow GCBC.
BC trains an (unconditional) behavioral cloning policy,
and GCBC trains a goal-conditioned policy with \Cref{eq:gcbc_obj}.
Flow BC and flow GCBC maximize the same objectives, but with flow policies~\citep{diffusionpolicy_chi2023, pi0_black2024}.
In the CFGRL framework, flow BC corresponds to CFGRL with $w = 0$ and flow GCBC corresponds to $w = 1$.

In addition to the five ``flat'' methods above, we also consider \emph{hierarchical} behavioral cloning~\citep{ril_gupta2019, play_lynch2019} on state-based tasks,
where we train both a high-level policy $\pi^h(\pl{\ell} \mid \pl{s}, \pl{g}): \gS \times \gS \to \Delta(\gS)$ that outputs subgoals $\ell$
and a low-level policy $\pi^\ell(\pl{a} \mid \pl{s}, \pl{\ell}): \gS \times \gS \to \Delta(\gA)$ that takes subgoals and outputs actions.
In this setting, we can apply CFGRL to each level's GCBC objective to enhance the optimality of both policies.
We call this variant hierarchical CFGRL (HCFGRL).
As baselines, we consider hierarchical GCBC~\cite{ril_gupta2019, play_lynch2019} and flow hierarchical GCBC,
which trains Gaussian and flow policies, respectively.

\begin{table*}[t!]
\centering
\caption{
\footnotesize
\textbf{Improving on GCBC.}
CFGRL consistently improves performance over GCBC across the board.
Numbers at or above the $95\%$ of the best performance in each category are boldfaced, as in OGBench~\citep{ogbench_park2025}.
}
\scalebox{0.73}
{
\begin{tabular}{lcccccccc}
\toprule
\multicolumn{1}{c}{} & \multicolumn{5}{c}{\texttt{Flat Policies}} & \multicolumn{3}{c}{\texttt{Hierarchical Policies}} \\
\cmidrule(lr){2-6} \cmidrule(lr){7-9}
\texttt{Task} & \texttt{BC} & \texttt{Flow BC} & \texttt{GCBC} & \texttt{Flow GCBC} & \texttt{\color{myred}CFGRL} & \texttt{HGCBC} & \texttt{Flow HGCBC} & \texttt{\color{myred}HCFGRL} \\
\midrule

\texttt{pointmaze-medium-navigate} & $0$ {\tiny $\pm 0$} & $1$ {\tiny $\pm 2$} & $9$ {\tiny $\pm 5$} & $66$ {\tiny $\pm 4$} & $\mathbf{77}$ {\tiny $\pm 6$} & $0$ {\tiny $\pm 0$} & $57$ {\tiny $\pm 7$} & $\mathbf{63}$ {\tiny $\pm 5$} \\
\texttt{pointmaze-large-navigate} & $0$ {\tiny $\pm 0$} & $0$ {\tiny $\pm 0$} & $25$ {\tiny $\pm 9$} & $\mathbf{74}$ {\tiny $\pm 7$} & $\mathbf{77}$ {\tiny $\pm 5$} & $0$ {\tiny $\pm 0$} & $\mathbf{75}$ {\tiny $\pm 6$} & $57$ {\tiny $\pm 8$} \\
\texttt{pointmaze-giant-navigate} & $0$ {\tiny $\pm 0$} & $0$ {\tiny $\pm 0$} & $2$ {\tiny $\pm 2$} & $4$ {\tiny $\pm 2$} & $\mathbf{30}$ {\tiny $\pm 10$} & $0$ {\tiny $\pm 0$} & $6$ {\tiny $\pm 5$} & $\mathbf{18}$ {\tiny $\pm 8$} \\
\texttt{pointmaze-teleport-navigate} & $0$ {\tiny $\pm 0$} & $0$ {\tiny $\pm 1$} & $24$ {\tiny $\pm 7$} & $\mathbf{46}$ {\tiny $\pm 4$} & $41$ {\tiny $\pm 8$} & $6$ {\tiny $\pm 5$} & $\mathbf{37}$ {\tiny $\pm 10$} & $\mathbf{38}$ {\tiny $\pm 13$} \\
\texttt{antmaze-medium-navigate} & $13$ {\tiny $\pm 2$} & $15$ {\tiny $\pm 6$} & $25$ {\tiny $\pm 8$} & $42$ {\tiny $\pm 9$} & $\mathbf{53}$ {\tiny $\pm 12$} & $45$ {\tiny $\pm 7$} & $67$ {\tiny $\pm 8$} & $\mathbf{90}$ {\tiny $\pm 5$} \\
\texttt{antmaze-large-navigate} & $11$ {\tiny $\pm 4$} & $5$ {\tiny $\pm 2$} & $20$ {\tiny $\pm 4$} & $22$ {\tiny $\pm 5$} & $\mathbf{24}$ {\tiny $\pm 10$} & $45$ {\tiny $\pm 4$} & $61$ {\tiny $\pm 5$} & $\mathbf{78}$ {\tiny $\pm 4$} \\
\texttt{antmaze-giant-navigate} & $0$ {\tiny $\pm 0$} & $0$ {\tiny $\pm 0$} & $0$ {\tiny $\pm 0$} & $0$ {\tiny $\pm 0$} & $\mathbf{1}$ {\tiny $\pm 1$} & $8$ {\tiny $\pm 5$} & $14$ {\tiny $\pm 5$} & $\mathbf{38}$ {\tiny $\pm 7$} \\
\texttt{antmaze-teleport-navigate} & $4$ {\tiny $\pm 2$} & $3$ {\tiny $\pm 2$} & $19$ {\tiny $\pm 5$} & $24$ {\tiny $\pm 7$} & $\mathbf{35}$ {\tiny $\pm 9$} & $35$ {\tiny $\pm 6$} & $38$ {\tiny $\pm 5$} & $\mathbf{50}$ {\tiny $\pm 5$} \\
\texttt{humanoidmaze-medium-navigate} & $3$ {\tiny $\pm 3$} & $2$ {\tiny $\pm 2$} & $6$ {\tiny $\pm 3$} & $8$ {\tiny $\pm 3$} & $\mathbf{19}$ {\tiny $\pm 6$} & $13$ {\tiny $\pm 3$} & $23$ {\tiny $\pm 4$} & $\mathbf{64}$ {\tiny $\pm 10$} \\
\texttt{humanoidmaze-large-navigate} & $0$ {\tiny $\pm 1$} & $0$ {\tiny $\pm 0$} & $2$ {\tiny $\pm 1$} & $1$ {\tiny $\pm 2$} & $\mathbf{3}$ {\tiny $\pm 2$} & $8$ {\tiny $\pm 5$} & $11$ {\tiny $\pm 2$} & $\mathbf{38}$ {\tiny $\pm 4$} \\
\texttt{antsoccer-arena-navigate} & $2$ {\tiny $\pm 2$} & $4$ {\tiny $\pm 1$} & $4$ {\tiny $\pm 3$} & $10$ {\tiny $\pm 5$} & $\mathbf{15}$ {\tiny $\pm 6$} & $8$ {\tiny $\pm 4$} & $16$ {\tiny $\pm 5$} & $\mathbf{37}$ {\tiny $\pm 5$} \\
\texttt{antsoccer-medium-navigate} & $0$ {\tiny $\pm 0$} & $0$ {\tiny $\pm 0$} & $\mathbf{6}$ {\tiny $\pm 4$} & $5$ {\tiny $\pm 3$} & $5$ {\tiny $\pm 3$} & $7$ {\tiny $\pm 2$} & $8$ {\tiny $\pm 3$} & $\mathbf{11}$ {\tiny $\pm 4$} \\
\texttt{cube-single-play} & $4$ {\tiny $\pm 2$} & $7$ {\tiny $\pm 5$} & $6$ {\tiny $\pm 2$} & $8$ {\tiny $\pm 2$} & $\mathbf{11}$ {\tiny $\pm 5$} & $12$ {\tiny $\pm 3$} & $26$ {\tiny $\pm 5$} & $\mathbf{46}$ {\tiny $\pm 6$} \\
\texttt{cube-double-play} & $1$ {\tiny $\pm 1$} & $2$ {\tiny $\pm 1$} & $2$ {\tiny $\pm 1$} & $\mathbf{3}$ {\tiny $\pm 1$} & $2$ {\tiny $\pm 2$} & $2$ {\tiny $\pm 1$} & $21$ {\tiny $\pm 7$} & $\mathbf{42}$ {\tiny $\pm 5$} \\
\texttt{scene-play} & $5$ {\tiny $\pm 2$} & $5$ {\tiny $\pm 3$} & $4$ {\tiny $\pm 3$} & $15$ {\tiny $\pm 5$} & $\mathbf{19}$ {\tiny $\pm 4$} & $7$ {\tiny $\pm 2$} & $14$ {\tiny $\pm 4$} & $\mathbf{18}$ {\tiny $\pm 5$} \\
\texttt{puzzle-3x3-play} & $\mathbf{3}$ {\tiny $\pm 2$} & $2$ {\tiny $\pm 2$} & $\mathbf{3}$ {\tiny $\pm 2$} & $1$ {\tiny $\pm 1$} & $2$ {\tiny $\pm 2$} & $\mathbf{3}$ {\tiny $\pm 2$} & $1$ {\tiny $\pm 1$} & $2$ {\tiny $\pm 2$} \\
\texttt{puzzle-4x4-play} & $\mathbf{0}$ {\tiny $\pm 0$} & $\mathbf{0}$ {\tiny $\pm 1$} & $\mathbf{0}$ {\tiny $\pm 0$} & $\mathbf{0}$ {\tiny $\pm 0$} & $\mathbf{0}$ {\tiny $\pm 0$} & $\mathbf{0}$ {\tiny $\pm 0$} & $\mathbf{0}$ {\tiny $\pm 0$} & $\mathbf{0}$ {\tiny $\pm 0$} \\
\midrule
\texttt{visual-antmaze-medium-navigate} & $14$ {\tiny $\pm 5$} & $7$ {\tiny $\pm 1$} & $11$ {\tiny $\pm 5$} & $19$ {\tiny $\pm 4$} & $\mathbf{23}$ {\tiny $\pm 4$} & - & - & - \\
\texttt{visual-antmaze-large-navigate} & $6$ {\tiny $\pm 4$} & $1$ {\tiny $\pm 1$} & $5$ {\tiny $\pm 2$} & $3$ {\tiny $\pm 1$} & $\mathbf{11}$ {\tiny $\pm 2$} & - & - & - \\
\texttt{visual-cube-single-play} & $1$ {\tiny $\pm 1$} & $1$ {\tiny $\pm 1$} & $1$ {\tiny $\pm 2$} & $13$ {\tiny $\pm 7$} & $\mathbf{37}$ {\tiny $\pm 9$} & - & - & - \\
\texttt{visual-cube-double-play} & $0$ {\tiny $\pm 0$} & $0$ {\tiny $\pm 0$} & $0$ {\tiny $\pm 0$} & $\mathbf{10}$ {\tiny $\pm 4$} & $7$ {\tiny $\pm 5$} & - & - & - \\
\texttt{visual-scene-play} & $5$ {\tiny $\pm 1$} & $7$ {\tiny $\pm 2$} & $5$ {\tiny $\pm 2$} & $25$ {\tiny $\pm 4$} & $\mathbf{40}$ {\tiny $\pm 8$} & - & - & - \\
\texttt{visual-puzzle-3x3-play} & $\mathbf{1}$ {\tiny $\pm 2$} & $0$ {\tiny $\pm 1$} & $0$ {\tiny $\pm 1$} & $0$ {\tiny $\pm 1$} & $0$ {\tiny $\pm 1$} & - & - & - \\
\texttt{visual-puzzle-4x4-play} & $\mathbf{0}$ {\tiny $\pm 0$} & $\mathbf{0}$ {\tiny $\pm 0$} & $\mathbf{0}$ {\tiny $\pm 0$} & $\mathbf{0}$ {\tiny $\pm 0$} & $\mathbf{0}$ {\tiny $\pm 0$} & - & - & - \\

\bottomrule
\end{tabular}
}
\vspace{-1em}
\label{table:gcbc}
\end{table*}

\textbf{Results.} \Cref{table:gcbc} presents evaluation results averaged over $8$ averaged over $8$ seeds for state-based tasks and $4$ seeds for pixel-based tasks,
denoting standard deviations after the $\pm$ symbols.
The results suggest that CFGRL consistently outperforms all four baselines on most of the tasks,
even with a single fixed value of the guidance strength ($w=3$).
Notably, on some tasks (e.g., \texttt{pointmaze-giant} and \texttt{visual-cube-single}),
CFGRL achieves more than $3 \times$ the success of the strongest baseline.
We again emphasize that this improvement is achieved simply by contrasting the prior and GCBC policies, \emph{without} training a value function.
As in \Cref{section:offline}, we also measure how the performance of CFGRL varies with different values of CFG weights $w$.
We present the results on four tasks in \Cref{fig:gcbc} (see \Cref{fig:gcbc_full} for the full results),
which shows that the performance generally improves as $w$ increases, as predicted by \Cref{thm:attenuation}.

\section{Discussion and Conclusion}

In this work, we introduced a principled connection between diffusion guidance and policy improvement in RL. Using this connection, we derive a simple framework that combines the simplicity of existing generative modeling objectives with the policy improvement capabilities of RL. We then instantiate this framework for policy extraction in offline RL, and for improving over goal-conditioned BC (GCBC) in the goal-conditioned setting. We show that CFGRL improves over the widely used AWR approach in the offline RL setting, and achieves a substantial improvement over GCBC in the goal-conditioned setting, while maintaining the simplicity of these prior methods.

\textbf{Limitations.} Our method does not claim to replace full RL procedures---we assume a given value function and do not make any prescriptions about how to train it. In our experiments, CFGRL takes the place of prior supervised learning methods for policy extraction, maintaining their simplicity and stability. However, more advanced policy extraction methods and online RL techniques, such as policy gradients~\citep{ddpg_lillicrap2016, ppo_schulman2017}, could provide for stronger extrapolation. By itself, CFGRL does not represent a state-of-the-art RL algorithm, but rather an additional tool in the algorithm designer's toolbox that can take the place of policy extraction methods such as AWR, as well as a theoretical connection that we hope will inspire future work.

\textbf{How should I use CFGRL in practice?} The simplicity of our method lends to a plug-and-play approach. If you are already training a goal-conditioned policy, or are conditioning on outcome in any way, then try using guidance as we do in the paper. If you can, do a sweep over $w$ weights, which conveniently can be done without retraining the model.

\textbf{Code.} The presented experiments can be run via \url{https://github.com/kvfrans/cfgrl}.

\section{Acknowledgements}
This work was supported in part by an NSF Fellowship for KF, under grant No. DGE 2146752, and by the Korea Foundation for Advanced Studies (KFAS) for SP. Any opinions, findings, and conclusions or recommendations expressed in this material are those of the author(s) and do not necessarily reflect the views of the NSF. PA holds concurrent appointments as a Professor at UC Berkeley and as an Amazon Scholar. This paper describes work performed at UC Berkeley and is not associated with Amazon. This research used the Savio computational cluster resource provided by the Berkeley Research Computing program at UC Berkeley.

{\footnotesize
\bibliographystyle{plainnat}
\bibliography{neurips_2024}

\begin{thebibliography}{80}
\providecommand{\natexlab}[1]{#1}
\providecommand{\url}[1]{\texttt{#1}}
\expandafter\ifx\csname urlstyle\endcsname\relax
  \providecommand{\doi}[1]{doi: #1}\else
  \providecommand{\doi}{doi: \begingroup \urlstyle{rm}\Url}\fi

\bibitem[Ada et~al.(2024)Ada, Oztop, and Ugur]{srdp_ada2024}
Suzan~Ece Ada, Erhan Oztop, and Emre Ugur.
\newblock Diffusion policies for out-of-distribution generalization in offline reinforcement learning.
\newblock \emph{IEEE Robotics and Automation Letters (RA-L)}, 9:\penalty0 3116--3123, 2024.

\bibitem[Albergo and Vanden-Eijnden(2023)]{flow_albergo2023}
Michael~S Albergo and Eric Vanden-Eijnden.
\newblock Building normalizing flows with stochastic interpolants.
\newblock In \emph{International Conference on Learning Representations (ICLR)}, 2023.

\bibitem[An et~al.(2021)An, Moon, Kim, and Song]{edac_an2021}
Gaon An, Seungyong Moon, Jang-Hyun Kim, and Hyun~Oh Song.
\newblock Uncertainty-based offline reinforcement learning with diversified q-ensemble.
\newblock In \emph{Neural Information Processing Systems (NeurIPS)}, 2021.

\bibitem[Andrychowicz et~al.(2017)Andrychowicz, Wolski, Ray, Schneider, Fong, Welinder, McGrew, Tobin, Pieter~Abbeel, and Zaremba]{her_andrychowicz2017}
Marcin Andrychowicz, Filip Wolski, Alex Ray, Jonas Schneider, Rachel Fong, Peter Welinder, Bob McGrew, Josh Tobin, OpenAI Pieter~Abbeel, and Wojciech Zaremba.
\newblock Hindsight experience replay.
\newblock In \emph{Neural Information Processing Systems (NeurIPS)}, 2017.

\bibitem[Black et~al.(2024{\natexlab{a}})Black, Brown, Driess, Esmail, Equi, Finn, Fusai, Groom, Hausman, Ichter, et~al.]{pi0_black2024}
Kevin Black, Noah Brown, Danny Driess, Adnan Esmail, Michael Equi, Chelsea Finn, Niccolo Fusai, Lachy Groom, Karol Hausman, Brian Ichter, et~al.
\newblock $\pi_0$: A vision-language-action flow model for general robot control.
\newblock \emph{ArXiv}, abs/2410.24164, 2024{\natexlab{a}}.

\bibitem[Black et~al.(2024{\natexlab{b}})Black, Nakamoto, Atreya, Walke, Finn, Kumar, and Levine]{susie_black2024}
Kevin Black, Mitsuhiko Nakamoto, Pranav Atreya, Homer Walke, Chelsea Finn, Aviral Kumar, and Sergey Levine.
\newblock Zero-shot robotic manipulation with pretrained image-editing diffusion models.
\newblock In \emph{International Conference on Learning Representations (ICLR)}, 2024{\natexlab{b}}.

\bibitem[Brandfonbrener et~al.(2021)Brandfonbrener, Whitney, Ranganath, and Bruna]{onestep_brandfonbrener2021}
David Brandfonbrener, William~F. Whitney, Rajesh Ranganath, and Joan Bruna.
\newblock Offline rl without off-policy evaluation.
\newblock In \emph{Neural Information Processing Systems (NeurIPS)}, 2021.

\bibitem[Chen et~al.(2023)Chen, Lu, Ying, Su, and Zhu]{sfbc_chen2023}
Huayu Chen, Cheng Lu, Chengyang Ying, Hang Su, and Jun Zhu.
\newblock Offline reinforcement learning via high-fidelity generative behavior modeling.
\newblock In \emph{International Conference on Learning Representations (ICLR)}, 2023.

\bibitem[Chen et~al.(2024{\natexlab{a}})Chen, Lu, Wang, Su, and Zhu]{srpo_chen2024}
Huayu Chen, Cheng Lu, Zhengyi Wang, Hang Su, and Jun Zhu.
\newblock Score regularized policy optimization through diffusion behavior.
\newblock In \emph{International Conference on Learning Representations (ICLR)}, 2024{\natexlab{a}}.

\bibitem[Chen et~al.(2024{\natexlab{b}})Chen, Zheng, Su, and Zhu]{eda_chen2024}
Huayu Chen, Kaiwen Zheng, Hang Su, and Jun Zhu.
\newblock Aligning diffusion behaviors with q-functions for efficient continuous control.
\newblock In \emph{Neural Information Processing Systems (NeurIPS)}, 2024{\natexlab{b}}.

\bibitem[Chen et~al.(2021)Chen, Lu, Rajeswaran, Lee, Grover, Laskin, Abbeel, Srinivas, and Mordatch]{dt_chen2021}
Lili Chen, Kevin Lu, Aravind Rajeswaran, Kimin Lee, Aditya Grover, Michael Laskin, P.~Abbeel, A.~Srinivas, and Igor Mordatch.
\newblock Decision transformer: Reinforcement learning via sequence modeling.
\newblock In \emph{Neural Information Processing Systems (NeurIPS)}, 2021.

\bibitem[Chen et~al.(2024{\natexlab{c}})Chen, Wang, and Zhou]{dtql_chen2024}
Tianyu Chen, Zhendong Wang, and Mingyuan Zhou.
\newblock Diffusion policies creating a trust region for offline reinforcement learning.
\newblock In \emph{Neural Information Processing Systems (NeurIPS)}, 2024{\natexlab{c}}.

\bibitem[Chi et~al.(2023)Chi, Xu, Feng, Cousineau, Du, Burchfiel, Tedrake, and Song]{diffusionpolicy_chi2023}
Cheng Chi, Zhenjia Xu, Siyuan Feng, Eric Cousineau, Yilun Du, Benjamin Burchfiel, Russ Tedrake, and Shuran Song.
\newblock Diffusion policy: Visuomotor policy learning via action diffusion.
\newblock In \emph{Robotics: Science and Systems (RSS)}, 2023.

\bibitem[da~Silva(2023)]{rl_silva2023}
Bruno~C. da~Silva.
\newblock Reinforcement learning lectures notes, 2023.
\newblock URL \url{https://people.cs.umass.edu/~bsilva/courses/CMPSCI_687/Fall2023/Lecture_Notes_v1.0_687_F23.pdf}.

\bibitem[Dayan and Hinton(1997)]{em_dayan1997}
Peter Dayan and Geoffrey~E Hinton.
\newblock Using expectation-maximization for reinforcement learning.
\newblock \emph{Neural Computation}, 9:\penalty0 271--278, 1997.

\bibitem[Ding et~al.(2024)Ding, Hu, Zhang, Ren, Zhang, Yu, Wang, and Shi]{qvpo_ding2024}
Shutong Ding, Ke~Hu, Zhenhao Zhang, Kan Ren, Weinan Zhang, Jingyi Yu, Jingya Wang, and Ye~Shi.
\newblock Diffusion-based reinforcement learning via q-weighted variational policy optimization.
\newblock In \emph{Neural Information Processing Systems (NeurIPS)}, 2024.

\bibitem[Ding and Jin(2024)]{consistencyac_ding2024}
Zihan Ding and Chi Jin.
\newblock Consistency models as a rich and efficient policy class for reinforcement learning.
\newblock In \emph{International Conference on Learning Representations (ICLR)}, 2024.

\bibitem[Emmons et~al.(2022)Emmons, Eysenbach, Kostrikov, and Levine]{rvs_emmons2022}
Scott Emmons, Benjamin Eysenbach, Ilya Kostrikov, and Sergey Levine.
\newblock Rvs: What is essential for offline rl via supervised learning?
\newblock In \emph{International Conference on Learning Representations (ICLR)}, 2022.

\bibitem[Eysenbach et~al.(2022{\natexlab{a}})Eysenbach, Udatha, Salakhutdinov, and Levine]{ocbc_eysenbach2022}
Benjamin Eysenbach, Soumith Udatha, Russ~R Salakhutdinov, and Sergey Levine.
\newblock Imitating past successes can be very suboptimal.
\newblock In \emph{Neural Information Processing Systems (NeurIPS)}, 2022{\natexlab{a}}.

\bibitem[Eysenbach et~al.(2022{\natexlab{b}})Eysenbach, Zhang, Salakhutdinov, and Levine]{crl_eysenbach2022}
Benjamin Eysenbach, Tianjun Zhang, Ruslan Salakhutdinov, and Sergey Levine.
\newblock Contrastive learning as goal-conditioned reinforcement learning.
\newblock In \emph{Neural Information Processing Systems (NeurIPS)}, 2022{\natexlab{b}}.

\bibitem[Fang et~al.(2025)Fang, Liu, Zhang, Wang, and Jing]{dac_fang2025}
Linjiajie Fang, Ruoxue Liu, Jing Zhang, Wenjia Wang, and Bingyi Jing.
\newblock Diffusion actor-critic: Formulating constrained policy iteration as diffusion noise regression for offline reinforcement learning.
\newblock In \emph{International Conference on Learning Representations (ICLR)}, 2025.

\bibitem[Fujimoto and Gu(2021)]{td3bc_fujimoto2021}
Scott Fujimoto and Shixiang~Shane Gu.
\newblock A minimalist approach to offline reinforcement learning.
\newblock In \emph{Neural Information Processing Systems (NeurIPS)}, 2021.

\bibitem[Fujimoto et~al.(2018)Fujimoto, van Hoof, and Meger]{td3_fujimoto2018}
Scott Fujimoto, Herke van Hoof, and David Meger.
\newblock Addressing function approximation error in actor-critic methods.
\newblock In \emph{International Conference on Machine Learning (ICML)}, 2018.

\bibitem[Gao et~al.(2024)Gao, Hoogeboom, Heek, Bortoli, Murphy, and Salimans]{diffusionflow_gao2024}
Ruiqi Gao, Emiel Hoogeboom, Jonathan Heek, Valentin~De Bortoli, Kevin~P. Murphy, and Tim Salimans.
\newblock Diffusion meets flow matching: Two sides of the same coin, 2024.
\newblock URL \url{https://diffusionflow.github.io/}.

\bibitem[Garg et~al.(2023)Garg, Hejna, Geist, and Ermon]{xql_garg2023}
Divyansh Garg, Joey Hejna, Matthieu Geist, and Stefano Ermon.
\newblock Extreme q-learning: Maxent rl without entropy.
\newblock In \emph{International Conference on Learning Representations (ICLR)}, 2023.

\bibitem[Ghosh et~al.(2021)Ghosh, Gupta, Reddy, Fu, Devin, Eysenbach, and Levine]{gcsl_ghosh2021}
Dibya Ghosh, Abhishek Gupta, Ashwin Reddy, Justin Fu, Coline Devin, Benjamin Eysenbach, and Sergey Levine.
\newblock Learning to reach goals via iterated supervised learning.
\newblock In \emph{International Conference on Learning Representations (ICLR)}, 2021.

\bibitem[Ghugare et~al.(2024)Ghugare, Geist, Berseth, and Eysenbach]{ocbc_ghugare2024}
Raj Ghugare, Matthieu Geist, Glen Berseth, and Benjamin Eysenbach.
\newblock Closing the gap between td learning and supervised learning--a generalisation point of view.
\newblock In \emph{International Conference on Learning Representations (ICLR)}, 2024.

\bibitem[Goodfellow et~al.(2015)Goodfellow, Shlens, and Szegedy]{adversarial_goodfellow2015}
Ian~J Goodfellow, Jonathon Shlens, and Christian Szegedy.
\newblock Explaining and harnessing adversarial examples.
\newblock In \emph{International Conference on Learning Representations (ICLR)}, 2015.

\bibitem[Gupta et~al.(2019)Gupta, Kumar, Lynch, Levine, and Hausman]{ril_gupta2019}
Abhishek Gupta, Vikash Kumar, Corey Lynch, Sergey Levine, and Karol Hausman.
\newblock Relay policy learning: Solving long-horizon tasks via imitation and reinforcement learning.
\newblock In \emph{Conference on Robot Learning (CoRL)}, 2019.

\bibitem[Hansen-Estruch et~al.(2023)Hansen-Estruch, Kostrikov, Janner, Kuba, and Levine]{idql_hansenestruch2023}
Philippe Hansen-Estruch, Ilya Kostrikov, Michael Janner, Jakub~Grudzien Kuba, and Sergey Levine.
\newblock Idql: Implicit q-learning as an actor-critic method with diffusion policies.
\newblock \emph{ArXiv}, abs/2304.10573, 2023.

\bibitem[He et~al.(2023)He, Shen, Zhang, Tan, and Wang]{diffcps_he2023}
Longxiang He, Li~Shen, Linrui Zhang, Junbo Tan, and Xueqian Wang.
\newblock Diffcps: Diffusion model based constrained policy search for offline reinforcement learning.
\newblock \emph{ArXiv}, abs/2310.05333, 2023.

\bibitem[He et~al.(2024)He, Shen, Tan, and Wang]{aligniql_he2024}
Longxiang He, Li~Shen, Junbo Tan, and Xueqian Wang.
\newblock Aligniql: Policy alignment in implicit q-learning through constrained optimization.
\newblock \emph{ArXiv}, abs/2405.18187, 2024.

\bibitem[Hendrycks and Gimpel(2016)]{gelu_hendrycks2016}
Dan Hendrycks and Kevin Gimpel.
\newblock Gaussian error linear units (gelus).
\newblock \emph{ArXiv}, abs/1606.08415, 2016.

\bibitem[Ho and Salimans(2022)]{cfg_ho2022}
Jonathan Ho and Tim Salimans.
\newblock Classifier-free diffusion guidance.
\newblock \emph{ArXiv}, abs/2207.12598, 2022.

\bibitem[Ho et~al.(2020)Ho, Jain, and Abbeel]{ddpm_ho2020}
Jonathan Ho, Ajay Jain, and Pieter Abbeel.
\newblock Denoising diffusion probabilistic models.
\newblock In \emph{Neural Information Processing Systems (NeurIPS)}, 2020.

\bibitem[Janner et~al.(2022)Janner, Du, Tenenbaum, and Levine]{diffuser_janner2022}
Michael Janner, Yilun Du, Joshua~B. Tenenbaum, and Sergey Levine.
\newblock Planning with diffusion for flexible behavior synthesis.
\newblock In \emph{International Conference on Machine Learning (ICML)}, 2022.

\bibitem[Kang et~al.(2023)Kang, Ma, Du, Pang, and Yan]{edp_kang2023}
Bingyi Kang, Xiao Ma, Chao Du, Tianyu Pang, and Shuicheng Yan.
\newblock Efficient diffusion policies for offline reinforcement learning.
\newblock In \emph{Neural Information Processing Systems (NeurIPS)}, 2023.

\bibitem[Kingma and Ba(2015)]{adam_kingma2015}
Diederik~P. Kingma and Jimmy Ba.
\newblock Adam: A method for stochastic optimization.
\newblock In \emph{International Conference on Learning Representations (ICLR)}, 2015.

\bibitem[Kostrikov et~al.(2022)Kostrikov, Nair, and Levine]{iql_kostrikov2022}
Ilya Kostrikov, Ashvin Nair, and Sergey Levine.
\newblock Offline reinforcement learning with implicit q-learning.
\newblock In \emph{International Conference on Learning Representations (ICLR)}, 2022.

\bibitem[Kuba et~al.(2023)Kuba, Abbeel, and Levine]{acd_kuba2023}
Jakub~Grudzien Kuba, Pieter Abbeel, and Sergey Levine.
\newblock Advantage-conditioned diffusion: Offline rl via generalization.
\newblock \emph{OpenReview}, 2023.

\bibitem[Kumar et~al.(2019)Kumar, Peng, and Levine]{rcp_kumar2019}
Aviral Kumar, Xue~Bin Peng, and Sergey Levine.
\newblock Reward-conditioned policies.
\newblock \emph{ArXiv}, abs/1912.13465, 2019.

\bibitem[Kumar et~al.(2020)Kumar, Zhou, Tucker, and Levine]{cql_kumar2020}
Aviral Kumar, Aurick Zhou, G.~Tucker, and Sergey Levine.
\newblock Conservative q-learning for offline reinforcement learning.
\newblock In \emph{Neural Information Processing Systems (NeurIPS)}, 2020.

\bibitem[Li et~al.(2024)Li, Krohn, Chen, Ajay, Agrawal, and Chalvatzaki]{ddiffpg_li2024}
Zechu Li, Rickmer Krohn, Tao Chen, Anurag Ajay, Pulkit Agrawal, and Georgia Chalvatzaki.
\newblock Learning multimodal behaviors from scratch with diffusion policy gradient.
\newblock In \emph{Neural Information Processing Systems (NeurIPS)}, 2024.

\bibitem[Lillicrap et~al.(2016)Lillicrap, Hunt, Pritzel, Heess, Erez, Tassa, Silver, and Wierstra]{ddpg_lillicrap2016}
Timothy~P. Lillicrap, Jonathan~J. Hunt, Alexander Pritzel, Nicolas Manfred~Otto Heess, Tom Erez, Yuval Tassa, David Silver, and Daan Wierstra.
\newblock Continuous control with deep reinforcement learning.
\newblock In \emph{International Conference on Learning Representations (ICLR)}, 2016.

\bibitem[Lipman et~al.(2023)Lipman, Chen, Ben-Hamu, Nickel, and Le]{flow_lipman2023}
Yaron Lipman, Ricky~TQ Chen, Heli Ben-Hamu, Maximilian Nickel, and Matt Le.
\newblock Flow matching for generative modeling.
\newblock In \emph{International Conference on Learning Representations (ICLR)}, 2023.

\bibitem[Lipman et~al.(2024{\natexlab{a}})Lipman, Havasi, Holderrieth, Shaul, Le, Karrer, Chen, Lopez-Paz, Ben-Hamu, and Gat]{flow_lipman2024}
Yaron Lipman, Marton Havasi, Peter Holderrieth, Neta Shaul, Matt Le, Brian Karrer, Ricky T.~Q. Chen, David Lopez-Paz, Heli Ben-Hamu, and Itai Gat.
\newblock Flow matching guide and code.
\newblock \emph{ArXiv}, abs/2412.06264, 2024{\natexlab{a}}.

\bibitem[Lipman et~al.(2024{\natexlab{b}})Lipman, Havasi, Holderrieth, Shaul, Le, Karrer, Chen, Lopez-Paz, Ben-Hamu, and Gat]{lipman2024flow}
Yaron Lipman, Marton Havasi, Peter Holderrieth, Neta Shaul, Matt Le, Brian Karrer, Ricky~TQ Chen, David Lopez-Paz, Heli Ben-Hamu, and Itai Gat.
\newblock Flow matching guide and code.
\newblock \emph{arXiv preprint arXiv:2412.06264}, 2024{\natexlab{b}}.

\bibitem[Liu et~al.(2023)Liu, Gong, and Liu]{flow_liu2023}
Xingchao Liu, Chengyue Gong, and Qiang Liu.
\newblock Flow straight and fast: Learning to generate and transfer data with rectified flow.
\newblock In \emph{International Conference on Learning Representations (ICLR)}, 2023.

\bibitem[Lu et~al.(2023)Lu, Chen, Chen, Su, Li, and Zhu]{qgpo_lu2023}
Cheng Lu, Huayu Chen, Jianfei Chen, Hang Su, Chongxuan Li, and Jun Zhu.
\newblock Contrastive energy prediction for exact energy-guided diffusion sampling in offline reinforcement learning.
\newblock In \emph{International Conference on Machine Learning (ICML)}, 2023.

\bibitem[Lynch et~al.(2019)Lynch, Khansari, Xiao, Kumar, Tompson, Levine, and Sermanet]{play_lynch2019}
Corey Lynch, Mohi Khansari, Ted Xiao, Vikash Kumar, Jonathan Tompson, Sergey Levine, and Pierre Sermanet.
\newblock Learning latent plans from play.
\newblock In \emph{Conference on Robot Learning (CoRL)}, 2019.

\bibitem[Mao et~al.(2024)Mao, Xu, Zhan, Zhang, and Zhang]{diffusiondice_mao2024}
Liyuan Mao, Haoran Xu, Xianyuan Zhan, Weinan Zhang, and Amy Zhang.
\newblock Diffusion-dice: In-sample diffusion guidance for offline reinforcement learning.
\newblock In \emph{Neural Information Processing Systems (NeurIPS)}, 2024.

\bibitem[Mark et~al.(2024)Mark, Gao, Sampaio, Srirama, Sharma, Finn, and Kumar]{parl_mark2024}
Max~Sobol Mark, Tian Gao, Georgia~Gabriela Sampaio, Mohan~Kumar Srirama, Archit Sharma, Chelsea Finn, and Aviral Kumar.
\newblock Policy agnostic rl: Offline rl and online rl fine-tuning of any class and backbone.
\newblock \emph{ArXiv}, abs/2412.06685, 2024.

\bibitem[Misra(2020)]{mish_misra2020}
Diganta Misra.
\newblock Mish: A self regularized non-monotonic activation function.
\newblock In \emph{British Machine Vision Association (BMVC)}, 2020.

\bibitem[Nair et~al.(2020)Nair, Dalal, Gupta, and Levine]{awac_nair2020}
Ashvin Nair, Murtaza Dalal, Abhishek Gupta, and Sergey Levine.
\newblock Accelerating online reinforcement learning with offline datasets.
\newblock \emph{ArXiv}, abs/2006.09359, 2020.

\bibitem[Nikulin et~al.(2023)Nikulin, Kurenkov, Tarasov, and Kolesnikov]{sacrnd_nikulin2023}
Alexander Nikulin, Vladislav Kurenkov, Denis Tarasov, and Sergey Kolesnikov.
\newblock Anti-exploration by random network distillation.
\newblock In \emph{International Conference on Machine Learning (ICML)}, 2023.

\bibitem[Park et~al.(2024)Park, Frans, Levine, and Kumar]{bottleneck_park2024}
Seohong Park, Kevin Frans, Sergey Levine, and Aviral Kumar.
\newblock Is value learning really the main bottleneck in offline rl?
\newblock In \emph{Neural Information Processing Systems (NeurIPS)}, 2024.

\bibitem[Park et~al.(2025{\natexlab{a}})Park, Frans, Eysenbach, and Levine]{ogbench_park2025}
Seohong Park, Kevin Frans, Benjamin Eysenbach, and Sergey Levine.
\newblock Ogbench: Benchmarking offline goal-conditioned rl.
\newblock In \emph{International Conference on Learning Representations (ICLR)}, 2025{\natexlab{a}}.

\bibitem[Park et~al.(2025{\natexlab{b}})Park, Li, and Levine]{fql_park2025}
Seohong Park, Qiyang Li, and Sergey Levine.
\newblock Flow q-learning.
\newblock In \emph{International Conference on Machine Learning (ICML)}, 2025{\natexlab{b}}.

\bibitem[Peng et~al.(2019)Peng, Kumar, Zhang, and Levine]{awr_peng2019}
Xue~Bin Peng, Aviral Kumar, Grace Zhang, and Sergey Levine.
\newblock Advantage-weighted regression: Simple and scalable off-policy reinforcement learning.
\newblock \emph{ArXiv}, abs/1910.00177, 2019.

\bibitem[Peters and Schaal(2007)]{rwr_peters2007}
Jan Peters and Stefan Schaal.
\newblock Reinforcement learning by reward-weighted regression for operational space control.
\newblock In \emph{International Conference on Machine Learning (ICML)}, 2007.

\bibitem[Psenka et~al.(2024)Psenka, Escontrela, Abbeel, and Ma]{qsm_psenka2024}
Michael Psenka, Alejandro Escontrela, Pieter Abbeel, and Yi~Ma.
\newblock Learning a diffusion model policy from rewards via q-score matching.
\newblock In \emph{International Conference on Machine Learning (ICML)}, 2024.

\bibitem[Ren et~al.(2025)Ren, Lidard, Ankile, Simeonov, Agrawal, Majumdar, Burchfiel, Dai, and Simchowitz]{dppo_ren2025}
Allen~Z Ren, Justin Lidard, Lars~L Ankile, Anthony Simeonov, Pulkit Agrawal, Anirudha Majumdar, Benjamin Burchfiel, Hongkai Dai, and Max Simchowitz.
\newblock Diffusion policy policy optimization.
\newblock In \emph{International Conference on Learning Representations (ICLR)}, 2025.

\bibitem[Schulman et~al.(2015)Schulman, Levine, Abbeel, Jordan, and Moritz]{trpo_schulman2015}
John Schulman, Sergey Levine, Pieter Abbeel, Michael Jordan, and Philipp Moritz.
\newblock Trust region policy optimization.
\newblock In \emph{International Conference on Machine Learning (ICML)}, 2015.

\bibitem[Schulman et~al.(2017)Schulman, Wolski, Dhariwal, Radford, and Klimov]{ppo_schulman2017}
John Schulman, Filip Wolski, Prafulla Dhariwal, Alec Radford, and Oleg Klimov.
\newblock Proximal policy optimization algorithms.
\newblock \emph{ArXiv}, abs/1707.06347, 2017.

\bibitem[Sikchi et~al.(2024)Sikchi, Zheng, Zhang, and Niekum]{dualrl_sikchi2024}
Harshit~S. Sikchi, Qinqing Zheng, Amy Zhang, and Scott Niekum.
\newblock Dual rl: Unification and new methods for reinforcement and imitation learning.
\newblock In \emph{International Conference on Learning Representations (ICLR)}, 2024.

\bibitem[Sohl-Dickstein et~al.(2015)Sohl-Dickstein, Weiss, Maheswaranathan, and Ganguli]{diffusion_sohl2015}
Jascha Sohl-Dickstein, Eric Weiss, Niru Maheswaranathan, and Surya Ganguli.
\newblock Deep unsupervised learning using nonequilibrium thermodynamics.
\newblock In \emph{International Conference on Machine Learning (ICML)}, 2015.

\bibitem[Song and Ermon(2019)]{ncsn_song2019}
Yang Song and Stefano Ermon.
\newblock Generative modeling by estimating gradients of the data distribution.
\newblock In \emph{Neural Information Processing Systems (NeurIPS)}, 2019.

\bibitem[Sutton and Barto(2005)]{rl_sutton2005}
Richard~S. Sutton and Andrew~G. Barto.
\newblock Reinforcement learning: An introduction.
\newblock \emph{IEEE Transactions on Neural Networks}, 16:\penalty0 285--286, 2005.

\bibitem[Sutton et~al.(1999)Sutton, McAllester, Singh, and Mansour]{pg_sutton1999}
Richard~S Sutton, David McAllester, Satinder Singh, and Yishay Mansour.
\newblock Policy gradient methods for reinforcement learning with function approximation.
\newblock In \emph{Neural Information Processing Systems (NeurIPS)}, 1999.

\bibitem[Tarasov et~al.(2023)Tarasov, Kurenkov, Nikulin, and Kolesnikov]{rebrac_tarasov2023}
Denis Tarasov, Vladislav Kurenkov, Alexander Nikulin, and Sergey Kolesnikov.
\newblock Revisiting the minimalist approach to offline reinforcement learning.
\newblock In \emph{Neural Information Processing Systems (NeurIPS)}, 2023.

\bibitem[Touati and Ollivier(2021)]{fb_touati2021}
Ahmed Touati and Yann Ollivier.
\newblock Learning one representation to optimize all rewards.
\newblock In \emph{Neural Information Processing Systems (NeurIPS)}, 2021.

\bibitem[Wang et~al.(2023)Wang, Hunt, and Zhou]{dql_wang2023}
Zhendong Wang, Jonathan~J Hunt, and Mingyuan Zhou.
\newblock Diffusion policies as an expressive policy class for offline reinforcement learning.
\newblock In \emph{International Conference on Learning Representations (ICLR)}, 2023.

\bibitem[Wang et~al.(2020)Wang, Novikov, Zolna, Springenberg, Reed, Shahriari, Siegel, Merel, Gulcehre, Heess, and de~Freitas]{crr_wang2020}
Ziyun Wang, Alexander Novikov, Konrad Zolna, Jost~Tobias Springenberg, Scott~E. Reed, Bobak Shahriari, Noah Siegel, Josh Merel, Caglar Gulcehre, Nicolas Manfred~Otto Heess, and Nando de~Freitas.
\newblock Critic regularized regression.
\newblock In \emph{Neural Information Processing Systems (NeurIPS)}, 2020.

\bibitem[Wu et~al.(2019)Wu, Tucker, and Nachum]{brac_wu2019}
Yifan Wu, G.~Tucker, and Ofir Nachum.
\newblock Behavior regularized offline reinforcement learning.
\newblock \emph{ArXiv}, abs/1911.11361, 2019.

\bibitem[Xu et~al.(2023)Xu, Jiang, Li, Yang, Wang, Chan, and Zhan]{sql_xu2023}
Haoran Xu, Li~Jiang, Jianxiong Li, Zhuoran Yang, Zhaoran Wang, Victor Chan, and Xianyuan Zhan.
\newblock Offline rl with no ood actions: In-sample learning via implicit value regularization.
\newblock In \emph{International Conference on Learning Representations (ICLR)}, 2023.

\bibitem[Yamagata et~al.(2023)Yamagata, Khalil, and Santos-Rodriguez]{qdt_yamagata2023}
Taku Yamagata, Ahmed Khalil, and Raul Santos-Rodriguez.
\newblock Q-learning decision transformer: Leveraging dynamic programming for conditional sequence modelling in offline rl.
\newblock In \emph{International Conference on Machine Learning (ICML)}, 2023.

\bibitem[Yang et~al.(2023)Yang, Huang, Lei, Zhong, Yang, Fang, Wen, Zhou, and Lin]{dipo_yang2023}
Long Yang, Zhixiong Huang, Fenghao Lei, Yucun Zhong, Yiming Yang, Cong Fang, Shiting Wen, Binbin Zhou, and Zhouchen Lin.
\newblock Policy representation via diffusion probability model for reinforcement learning.
\newblock \emph{ArXiv}, abs/2305.13122, 2023.

\bibitem[Yarats et~al.(2022)Yarats, Brandfonbrener, Liu, Laskin, Abbeel, Lazaric, and Pinto]{exorl_yarats2022}
Denis Yarats, David Brandfonbrener, Hao Liu, Michael Laskin, P.~Abbeel, Alessandro Lazaric, and Lerrel Pinto.
\newblock Don't change the algorithm, change the data: Exploratory data for offline reinforcement learning.
\newblock \emph{ArXiv}, abs/2201.13425, 2022.

\bibitem[Zhang et~al.(2024)Zhang, Luo, Sj{\"o}lund, Sch{\"o}n, and Mattsson]{entropydql_zhang2024}
Ruoqi Zhang, Ziwei Luo, Jens Sj{\"o}lund, Thomas~B Sch{\"o}n, and Per Mattsson.
\newblock Entropy-regularized diffusion policy with q-ensembles for offline reinforcement learning.
\newblock In \emph{Neural Information Processing Systems (NeurIPS)}, 2024.

\bibitem[Zhang et~al.(2025)Zhang, Zhang, and Gu]{qipo_zhang2025}
Shiyuan Zhang, Weitong Zhang, and Quanquan Gu.
\newblock Energy-weighted flow matching for offline reinforcement learning.
\newblock In \emph{International Conference on Learning Representations (ICLR)}, 2025.

\end{thebibliography}
}

\appendix

\section{Theoretical results}

\begin{lemma}[Chebyshev's sum inequality for probability measures] \label{lem:chebyshev}
For any probability measure $\mu$ on $\sR$ and any bounded, measurable, non-decreasing functions $g, h: \sR \to \sR$,
\begin{align}
    \int_\sR g(x)h(x) \mu(\de x) \geq \int_\sR g(x) \mu(\de x) \int_\sR h(x) \mu(\de x).
\end{align}
\end{lemma}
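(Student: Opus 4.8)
The plan is to use the standard ``doubling'' trick: introduce an independent copy of the random variable and rewrite the difference $\int gh\,\de\mu - (\int g\,\de\mu)(\int h\,\de\mu)$ as (half of) the integral of a pointwise-non-negative function. Concretely, I would pass to the product space $\sR \times \sR$ equipped with the product probability measure $\mu \otimes \mu$, with coordinates $x$ and $y$. The key elementary observation is that, since both $g$ and $h$ are non-decreasing, the product $\big(g(x)-g(y)\big)\big(h(x)-h(y)\big)$ is non-negative for \emph{every} pair $(x,y)$: if $x \geq y$ then $g(x)-g(y) \geq 0$ and $h(x)-h(y) \geq 0$, while if $x \leq y$ then both factors are $\leq 0$; in either case the product is $\geq 0$.

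The next step is to integrate this non-negative function against $\mu \otimes \mu$, obtaining
\[
  \int_\sR \int_\sR \big(g(x)-g(y)\big)\big(h(x)-h(y)\big)\,\mu(\de x)\,\mu(\de y) \;\geq\; 0 .
\]
Then I would expand the integrand as $g(x)h(x) - g(x)h(y) - g(y)h(x) + g(y)h(y)$, split the double integral termwise, and evaluate each piece using Fubini's theorem together with $\mu(\sR) = 1$: integrating out the ``unused'' variable gives $\int\!\!\int g(x)h(x)\,\mu(\de x)\mu(\de y) = \int_\sR gh\,\de\mu$ and likewise for the $g(y)h(y)$ term, while the mixed terms factor as $\int\!\!\int g(x)h(y)\,\mu(\de x)\mu(\de y) = \big(\int_\sR g\,\de\mu\big)\big(\int_\sR h\,\de\mu\big)$, and similarly for $g(y)h(x)$. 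Collecting the four contributions turns the displayed inequality into $2\int_\sR gh\,\de\mu - 2\big(\int_\sR g\,\de\mu\big)\big(\int_\sR h\,\de\mu\big) \geq 0$, and dividing by $2$ yields the claim.

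The only point that needs a bit of care — the closest thing to an obstacle — is justifying the interchange of integration order and the termwise splitting of the double integral. This is exactly where the boundedness hypothesis on $g$ and $h$ is used: it guarantees that each of the four integrands lies in $L^1(\mu \otimes \mu)$ (indeed all are bounded, and $\mu \otimes \mu$ is a finite measure), so Fubini's theorem applies without any extra hypotheses, and measurability of $g$ and $h$ ensures all the integrals are well-defined to begin with. No convergence theorem beyond this bookkeeping is required, so apart from this routine step the argument is immediate.
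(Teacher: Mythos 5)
Your proposal is correct and follows essentially the same route as the paper's proof: integrate the pointwise-nonnegative product $\bigl(g(x)-g(y)\bigr)\bigl(h(x)-h(y)\bigr)$ against $\mu \otimes \mu$, expand via Fubini, and divide by $2$. Your added remark that boundedness and measurability are exactly what license the termwise splitting is a fine (and accurate) elaboration of a step the paper leaves implicit.
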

\begin{proof}
Since $g$ and $h$ are non-decreasing, the signs of $g(y) - g(z)$ and $h(y) - h(z)$ are the same for any $y, z \in \sR$.
Hence, we have
\begin{align}
    0 &\leq \int_{\sR \times \sR} (g(y) - g(z)) (h(y) - h(z)) (\mu \otimes \mu) (\de y, \de z) \\
    &= \int_\sR \left( \int_\sR (g(y)h(y) + g(z)h(z) - g(y)h(z) - g(z)h(y)) \mu(\de y) \right) \mu(\de z) \\
    &= 2 \int_\sR g(x)h(x) \mu(\de x) - 2 \int_\sR g(x) \mu(\de x) \int_\sR h(x) \mu(\de x),
\end{align}
from which the conclusion follows,
where $\mu \otimes \mu$ denotes the product measure of $\mu$ and itself,
and we use Fubini's theorem in the second line.
\end{proof}

\begin{lemma} \label{lem:imp_lemma}
Let $s \in \gS$ be a state, $\pi, \hat \pi: \gS \to \Delta(\gA)$ be policies, and $f: \sR \to \sR$ be a bounded, measurable, non-negative, non-decreasing function.
Suppose that $\pi(a \mid s) = f(A_{\hat{\pi}} (s, a)) \hat \pi(a \mid s)$ and $\E_{a \sim \hat \pi(\cdot \mid s)}[f(A_{\hat \pi}(s, a))] = 1$.
Then,
\begin{align}
    \E_{a \sim \pi(\cdot \mid s)}[Q_{\hat \pi}(s, a)] \geq V_{\hat \pi}(s). \label{eq:imp_lemma}
\end{align}
\end{lemma}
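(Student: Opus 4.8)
The plan is to unfold the definitions and reduce the claim to an application of Chebyshev's sum inequality (\Cref{lem:chebyshev}). First I would rewrite the left-hand side of \Cref{eq:imp_lemma} by substituting the product form $\pi(a \mid s) = f(A_{\hat\pi}(s,a))\,\hat\pi(a \mid s)$, so that
\begin{align}
    \E_{a \sim \pi(\cdot \mid s)}[Q_{\hat\pi}(s,a)] = \E_{a \sim \hat\pi(\cdot \mid s)}[f(A_{\hat\pi}(s,a))\,Q_{\hat\pi}(s,a)].
\end{align}
Then I would use $Q_{\hat\pi}(s,a) = A_{\hat\pi}(s,a) + V_{\hat\pi}(s)$ to split this into $\E_{a\sim\hat\pi}[f(A_{\hat\pi}(s,a))\,A_{\hat\pi}(s,a)] + V_{\hat\pi}(s)\,\E_{a\sim\hat\pi}[f(A_{\hat\pi}(s,a))]$; by the normalization hypothesis $\E_{a\sim\hat\pi}[f(A_{\hat\pi}(s,a))] = 1$, the second term is exactly $V_{\hat\pi}(s)$, so it suffices to show $\E_{a\sim\hat\pi(\cdot\mid s)}[f(A_{\hat\pi}(s,a))\,A_{\hat\pi}(s,a)] \geq 0$.

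Next I would push this expectation forward onto $\sR$: let $\mu$ be the law of the random variable $A_{\hat\pi}(s,a)$ when $a \sim \hat\pi(\cdot\mid s)$, which is a probability measure on $\sR$. Writing $x$ for the advantage value, the inequality to prove becomes $\int_\sR f(x)\,x\,\mu(\de x) \geq 0$. Taking $g(x) = f(x)$ and $h(x) = x$ in \Cref{lem:chebyshev} — both are non-decreasing, and $h$ is measurable; $f$ is bounded, measurable, non-decreasing by hypothesis, and on the (bounded) support of $\mu$ the identity $h$ is bounded as well, so the hypotheses of the lemma are met — we get
\begin{align}
    \int_\sR f(x)\,x\,\mu(\de x) \geq \left(\int_\sR f(x)\,\mu(\de x)\right)\left(\int_\sR x\,\mu(\de x)\right) = 1 \cdot \E_{a\sim\hat\pi(\cdot\mid s)}[A_{\hat\pi}(s,a)].
\end{align}
Finally, $\E_{a\sim\hat\pi(\cdot\mid s)}[A_{\hat\pi}(s,a)] = \E_{a\sim\hat\pi(\cdot\mid s)}[Q_{\hat\pi}(s,a)] - V_{\hat\pi}(s) = 0$ by the standard identity $V_{\hat\pi}(s) = \E_{a\sim\hat\pi(\cdot\mid s)}[Q_{\hat\pi}(s,a)]$, and since $f \geq 0$ the product $f(x)\,x$ is nonnegative whenever $x \geq 0$ — but that alone is not enough, which is exactly why Chebyshev is needed. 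Chaining the two displays gives $\E_{a\sim\pi(\cdot\mid s)}[Q_{\hat\pi}(s,a)] \geq V_{\hat\pi}(s)$, as desired.

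The main obstacle, such as it is, is a technical one rather than a conceptual one: ensuring the measurability/boundedness conditions of \Cref{lem:chebyshev} actually apply to the pushforward measure $\mu$ and to the function $h(x) = x$ (the identity is unbounded on all of $\sR$, so one must either invoke boundedness of rewards — hence of $Q$, $V$, and $A$ — to restrict to a bounded interval containing $\operatorname{supp}\mu$, or state a mild integrability assumption). Once that is handled, the rest is the routine bookkeeping of substituting the product form and using the two elementary identities $Q = A + V$ and $\E_{a\sim\hat\pi}[A_{\hat\pi}(s,a)] = 0$.
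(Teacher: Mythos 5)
Your proposal is correct and follows essentially the same route as the paper's proof: both reduce the claim to Chebyshev's sum inequality (\Cref{lem:chebyshev}) applied to a pushforward of $\hat{\pi}(\cdot \mid s)$, you via the law of $A_{\hat{\pi}}(s,\cdot)$ with the pair $(f, \mathrm{id})$ after splitting $Q = A + V$ and using the normalization, the paper via the law of $Q_{\hat{\pi}}(s,\cdot)$ with the pair $(\mathrm{id}, f(\cdot - V_{\hat{\pi}}(s)))$, which is the same argument up to a shift by $V_{\hat{\pi}}(s)$. Your explicit note about restricting the unbounded identity function to the bounded support of the pushforward (via boundedness of rewards) is a reasonable piece of care that the paper's proof handles only implicitly, since it too uses the identity as one of the two functions in Chebyshev's inequality.
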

\begin{proof}
To apply \Cref{lem:chebyshev}, we first rewrite the left-hand side of \Cref{eq:imp_lemma} using probability measures as follows:
\begin{align}
    \E_{a \sim \pi(\cdot \mid s)}[Q_{\hat \pi}(s, a)]
    &= \int_\gA Q_{\hat{\pi}}(s, a) \pi_s(\de a) \\
    &= \int_\gA Q_{\hat{\pi}}(s, a) f(A_{\hat{\pi}}(s, a)) \hat \pi_s(\de a) \\
    &= \int_\gA Q_{\hat{\pi}}(s, a) f(Q_{\hat{\pi}}(s, a) - V_{\hat{\pi}}(s)) \hat \pi_s(\de a),
\end{align}
where $\pi_s$ and $\hat \pi_s$ denote the probability measures corresponding to
the distributions $\pi(\cdot \mid s)$ and $\hat \pi(\cdot \mid s)$, respectively.
Then,
\begin{align}
    \E_{a \sim \pi(\cdot \mid s)}[Q_{\hat \pi}(s, a)]
    &= \int_\gA Q_{\hat{\pi}}(s, a) f(Q_{\hat{\pi}}(s, a) - V_{\hat{\pi}}(s)) \hat \pi_s(\de a)  \\
    &= \int_\sR q f(q - V_{\hat \pi}(s)) \lambda(\de q) \\
    &\geq \left( \int_\sR q \lambda(\de q) \right) \left( \int_\sR f(q - V_{\hat{\pi}}(s)) \lambda(\de q) \right) \\
    &= \left( \int_\gA Q_{\hat{\pi}}(s, a) \hat \pi_s(\de a) \right) \left( \int_\gA f(Q_{\hat{\pi}}(s, a) - V_{\hat{\pi}}(s)) \hat \pi_s(\de a) \right) \\
    &= \left( \int_\gA Q_{\hat{\pi}}(s, a) \hat \pi_s(\de a) \right) \left( \int_\gA f(A_{\hat{\pi}}(s, a)) \hat \pi_s(\de a) \right) \\
    &= V_{\hat{\pi}}(s) \E_{a \sim \hat \pi(\cdot \mid s)}[f(A_{\hat \pi}(s, a))] \\
    &= V_{\hat{\pi}}(s),
\end{align}
where $\lambda$ denotes the pushforward measure of $\hat \pi_s$ by $Q_{\hat{\pi}}(s, \cdot)$,
and we use \Cref{lem:chebyshev} in the third line
with $g(x) = 1$ and $h(x) = f(x - V_{\hat{\pi}}(s))$,
both of which are non-decreasing.
\end{proof}

\begin{lemma}[Policy improvement theorem for stochastic policies~\citep{rl_sutton2005, rl_silva2023}] \label{lem:pit}
For any policies $\pi$ and $\hat \pi$ satisfying $\E_{a \sim \pi(\cdot \mid s)}[Q_{\hat \pi}(s, a)] \geq V_{\hat \pi}(s)$ for all $s \in \gS$,
\begin{align}
    J(\pi) \geq J(\hat \pi).
\end{align}
\end{lemma}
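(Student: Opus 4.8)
The plan is to prove the stronger pointwise statement that $V_\pi(s) \ge V_{\hat\pi}(s)$ for every $s \in \gS$, from which the lemma follows immediately: since $J(\pi) = \E_{s_0 \sim p(s_0)}[V_\pi(s_0)]$ and likewise $J(\hat\pi) = \E_{s_0 \sim p(s_0)}[V_{\hat\pi}(s_0)]$, taking expectations over the initial-state distribution preserves the inequality. This is the classical policy improvement theorem, and I would adapt the standard ``unrolling'' argument to the stochastic-policy setting.

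First I would rewrite the hypothesis using the one-step Bellman identity $Q_{\hat\pi}(s,a) = r(s,a) + \gamma\,\E_{s' \sim p(\cdot \mid s,a)}[V_{\hat\pi}(s')]$, obtaining, for all $s$,
\begin{align}
  V_{\hat\pi}(s) \le \E_{a \sim \pi(\cdot \mid s)}\Bigl[ r(s,a) + \gamma\,\E_{s' \sim p(\cdot \mid s,a)}[V_{\hat\pi}(s')] \Bigr].
\end{align}
Applying this same inequality to each occurrence of $V_{\hat\pi}$ on the right-hand side and iterating $n$ times, I would get, by a straightforward induction using linearity of expectation and the Markov property to collapse the nested expectations into the trajectory distribution induced by $\pi$,
\begin{align}
  V_{\hat\pi}(s) \le \E_{\tau \sim p(\tau \mid \pi),\, s_0 = s}\Bigl[ \textstyle\sum_{t=0}^{n-1} \gamma^t r(s_t, a_t) + \gamma^n V_{\hat\pi}(s_n) \Bigr]
\end{align}
for every $n \ge 1$, where the expectation is over trajectories generated by $\pi$ started at $s$.

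Finally I would let $n \to \infty$. Since $r$ is bounded (say $|r| \le R_{\max}$), $V_{\hat\pi}$ is bounded by $R_{\max}/(1-\gamma)$, so the remainder term $\gamma^n\,\E[V_{\hat\pi}(s_n)]$ vanishes, and the partial sums converge to $V_\pi(s) = \E_{\tau \sim p(\tau\mid\pi),\, s_0 = s}[\sum_t \gamma^t r(s_t,a_t)]$ by dominated convergence (each summand is dominated by $R_{\max}\gamma^t$, a summable envelope independent of $\tau$). This yields $V_{\hat\pi}(s) \le V_\pi(s)$ for all $s$, and hence the claim.

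The main obstacle is essentially bookkeeping rather than a substantive difficulty: I need to set up the trajectory measure $p(\tau \mid \pi)$ conditioned on $s_0 = s$ carefully, justify the induction step that folds one application of the hypothesis into this measure via the tower property, and rigorously pass the limit through the expectation using boundedness of $r$ and $\gamma < 1$. Note that the earlier results (\Cref{lem:chebyshev}, \Cref{lem:imp_lemma}) play no role in this proof — they are precisely what supplies the hypothesis $\E_{a \sim \pi(\cdot\mid s)}[Q_{\hat\pi}(s,a)] \ge V_{\hat\pi}(s)$ when the lemma is applied downstream — so the argument stands on its own given that hypothesis.
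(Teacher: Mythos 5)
Your proof is correct: the standard unrolling of the hypothesis through the Bellman identity, followed by letting the horizon go to infinity under bounded rewards and discounting, is exactly the classical argument. The paper does not write out its own proof but simply defers to Section 4.2 of Sutton and Barto and Theorem 3 of the cited stochastic-policy generalization, which contain precisely this argument, so your proposal matches the intended proof.
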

\begin{proof}
    This is a straightforward generalization of the policy improvement theorem to stochastic policies.
    See Section 4.2 of \citet{rl_sutton2005} and Theorem 3 of \citet{rl_silva2023}.
\end{proof}

\begin{theorem}[Policy improvement by reweighting] \label{thm:reweight}
Let $\pi, \hat \pi: \gS \to \Delta(\gA)$ be policies and $f: \sR \to \sR$ be a bounded, measurable, non-negative, non-decreasing function.
Suppose that $\pi$ satisfies $\pi(a \mid s) \propto f(A_{\hat{\pi}} (s, a)) \hat \pi(a \mid s)$.
Then,
\begin{align}
    J(\pi) \geq J(\hat \pi).
\end{align}
\end{theorem}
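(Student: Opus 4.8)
The plan is to reduce the theorem to the three lemmas already in hand: \Cref{lem:imp_lemma}, which handles a single state under an \emph{already-normalized} reweighting; and \Cref{lem:pit}, the stochastic policy improvement theorem. Concretely, I would first make the proportionality constant explicit and absorb it into a per-state rescaling of $f$, then apply \Cref{lem:imp_lemma} state-by-state to obtain $\E_{a \sim \pi(\cdot\mid s)}[Q_{\hat\pi}(s,a)] \geq V_{\hat\pi}(s)$ for every $s$, and finally invoke \Cref{lem:pit} to conclude $J(\pi) \geq J(\hat\pi)$.

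For the first step, fix $s \in \gS$ and set $Z(s) = \E_{a \sim \hat\pi(\cdot\mid s)}[f(A_{\hat\pi}(s,a))]$; since $f$ is bounded and non-negative and $\hat\pi(\cdot\mid s)$ is a probability measure, $0 \leq Z(s) < \infty$, so the product $f(A_{\hat\pi}(s,\cdot))\hat\pi(\cdot\mid s)$ is integrable and $\pi$ is well-defined. When $Z(s) > 0$, the hypothesis $\pi(a\mid s) \propto f(A_{\hat\pi}(s,a))\hat\pi(a\mid s)$ is exactly the statement $\pi(a\mid s) = f_s(A_{\hat\pi}(s,a))\,\hat\pi(a\mid s)$ with $f_s \coloneqq f/Z(s)$; this $f_s$ is a positive rescaling of $f$, hence again bounded, measurable, non-negative, and non-decreasing, and by construction $\E_{a\sim\hat\pi(\cdot\mid s)}[f_s(A_{\hat\pi}(s,a))] = 1$. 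Thus \Cref{lem:imp_lemma} applies with $f_s$ at state $s$ and gives $\E_{a\sim\pi(\cdot\mid s)}[Q_{\hat\pi}(s,a)] \geq V_{\hat\pi}(s)$. For the remaining (measure-zero-in-$a$) states with $Z(s)=0$, the proportionality does not determine $\pi(\cdot\mid s)$, so I would adopt the natural convention $\pi(\cdot\mid s) = \hat\pi(\cdot\mid s)$ there, under which the inequality holds with equality because $\E_{a\sim\hat\pi(\cdot\mid s)}[Q_{\hat\pi}(s,a)] = V_{\hat\pi}(s)$ by definition of $V_{\hat\pi}$. Either way the pointwise hypothesis of \Cref{lem:pit} is satisfied for all $s \in \gS$, and the conclusion $J(\pi) \geq J(\hat\pi)$ follows immediately.

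The main obstacle here is bookkeeping rather than analysis: \Cref{lem:imp_lemma} is phrased for a normalized reweighting, whereas the theorem is phrased with a bare proportionality, so the crux is to cleanly fold the state-dependent partition function $Z(s)$ into a per-state rescaling of $f$ and to check that this rescaling preserves every structural property (boundedness, measurability, monotonicity, non-negativity) that \Cref{lem:imp_lemma}---and through it \Cref{lem:chebyshev}---relies on, while also disposing of the degenerate case $Z(s) = 0$ where the proportionality is vacuous. All the genuinely analytic content (the Chebyshev-type correlation inequality) has already been isolated in \Cref{lem:chebyshev} and \Cref{lem:imp_lemma}, so no further estimates should be needed.
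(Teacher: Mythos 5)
Your proposal is correct and follows essentially the same route as the paper's proof: fold the per-state normalizer $Z(s)$ into a rescaled weighting function, apply \Cref{lem:imp_lemma} state-by-state, and conclude via \Cref{lem:pit}. Your explicit treatment of the degenerate case $Z(s)=0$ is a small tidy addition the paper leaves implicit, but it does not change the argument.
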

\begin{proof}
Fix $s \in \gS$.
Let $\pi(a \mid s) = f(A_{\hat \pi}(s, a))\hat \pi(a \mid s) / Z(s)$,
where the normalization function $Z: \gS \to \sR$ is defined as
\begin{align}
    Z(s) = \int_\gA f(A_{\hat \pi}(s, a)) \hat \pi_s(\de a).
\end{align}
Then, we have
\begin{align}
1 &= \int_\gA f(A_{\hat \pi}(s, a))/Z(s) \hat \pi_s(\de a) \\
&= \E_{a \sim \hat \pi(\cdot \mid s)}[f(A_{\hat \pi}(s, a))/Z(s)].
\end{align}
Defining $g = f / Z(s)$, we get $\E_{a \sim \hat \pi(\cdot \mid s)}[g(A_{\hat \pi}(s, a))] = 1$.
Since $f$ is non-negative and non-decreasing, so is $g$,
and the conclusion directly follows from \Cref{lem:imp_lemma}
(with $\pi(a \mid s) = g(A_{\hat{\pi}} (s, a)) \hat \pi(a \mid s)$)
and \Cref{lem:pit}.
\end{proof}

\begin{theorem} \label{thm:w_reweight}
Let $0 \leq w_1 \leq w_2$ be real numbers,
$\pi_1, \pi_2, \hat \pi: \gS \to \Delta(\gA)$ be policies,
and $f: \sR \to \sR$ be a bounded, measurable, non-negative, non-decreasing function.
Suppose that $\pi_i$ satisfies
\mbox{$\pi_i(a \mid s) \propto f(A_{\hat{\pi}} (s, a))^{w_i} \hat \pi(a \mid s)$} for $i = 1, 2$.
Then,
\begin{align}
    J(\pi_1) \leq J(\pi_2).
\end{align}
\end{theorem}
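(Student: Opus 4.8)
The plan is to derive the claim from the policy-improvement toolkit already set up (\Cref{lem:chebyshev,lem:imp_lemma,lem:pit,thm:reweight}) by exhibiting $\pi_2$ as a reweighting of $\pi_1$ rather than of $\hat\pi$. If $w_1 = w_2$ the statement is immediate, so assume $0 \le w_1 < w_2$. Since $f$ is non-negative, $f(A_{\hat\pi}(s,a))^{w_2} = f(A_{\hat\pi}(s,a))^{w_2-w_1} f(A_{\hat\pi}(s,a))^{w_1}$, and hence
\begin{align}
    \pi_2(a \mid s) \propto f(A_{\hat\pi}(s,a))^{\,w_2-w_1}\, \pi_1(a \mid s).
\end{align}
Writing $g = f^{\,w_2-w_1}$, the function $g$ is again bounded, measurable, non-negative and non-decreasing, so $\pi_2$ is a $g$-reweighting of the reference policy $\pi_1$ along the advantage $A_{\hat\pi}$. (When $w_1 = 0$ one has $\pi_1 = \hat\pi$ and the claim is just \Cref{thm:reweight}, so the content is in the case $0 < w_1 < w_2$.)

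Next I would prove the one-step inequality $\E_{a \sim \pi_2(\cdot \mid s)}[Q_{\pi_1}(s,a)] \ge V_{\pi_1}(s)$ for every $s \in \gS$; granting this, \Cref{lem:pit} applied to the pair $(\pi_2, \pi_1)$ gives $J(\pi_2) \ge J(\pi_1)$. Normalizing, $\pi_2(a \mid s) = \big(g(A_{\hat\pi}(s,a))/Z(s)\big)\,\pi_1(a \mid s)$ with $Z(s) = \E_{a \sim \pi_1(\cdot \mid s)}[g(A_{\hat\pi}(s,a))]$, so that the reweighting factor has mean $1$ under $\pi_1(\cdot \mid s)$; the one-step inequality is then equivalent to
\begin{align}
    \E_{a \sim \pi_1(\cdot \mid s)}\!\big[ g(A_{\hat\pi}(s,a))\, Q_{\pi_1}(s,a) \big] \ge \E_{a \sim \pi_1(\cdot \mid s)}\!\big[ g(A_{\hat\pi}(s,a)) \big]\; \E_{a \sim \pi_1(\cdot \mid s)}\!\big[ Q_{\pi_1}(s,a) \big],
\end{align}
i.e., a non-negative-covariance statement which I would try to read off from \Cref{lem:chebyshev}, mimicking the proof of \Cref{lem:imp_lemma}: push $\pi_1(\cdot \mid s)$ forward by the appropriate real statistic and apply Chebyshev's sum inequality to two similarly-ordered functions.

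The step I expect to be the main obstacle is exactly this last one. In \Cref{lem:imp_lemma} the reweighting factor was a non-decreasing function of $Q_{\hat\pi}(s,\cdot)$ — the very quantity being averaged — so the two functions were comonotone for free. Here the reweighting factor $g(A_{\hat\pi}(s,\cdot))$ is monotone in the \emph{reference} advantage $A_{\hat\pi}(s,\cdot)$, whereas the quantity being averaged is $Q_{\pi_1}(s,\cdot)$, and passing from $\hat\pi$ to $\pi_1$ changes downstream values, so $A_{\hat\pi}(s,\cdot)$ and $Q_{\pi_1}(s,\cdot)$ need not order the actions the same way. Thus the crux is to argue that $g(A_{\hat\pi}(s,\cdot))$ may legitimately be viewed as a non-decreasing function of $A_{\pi_1}(s,\cdot)$ (equivalently of $Q_{\pi_1}(s,\cdot)$) — i.e., that the reweighting does not invert the advantage ordering. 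This is transparent in the single-step/bandit case, where $Q_{\pi_1}(s,\cdot)$ and $Q_{\hat\pi}(s,\cdot)$ differ only by the state-dependent constant $V_{\pi_1}(s) - V_{\hat\pi}(s)$, and is the point requiring the most care in general; with it in hand, one simply invokes \Cref{thm:reweight} with reference policy $\pi_1$ and weighting function $g$ to conclude.
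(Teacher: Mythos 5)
Your plan follows the paper's own route exactly: factor $\pi_2(a \mid s) \propto f(A_{\hat\pi}(s,a))^{w_2-w_1}\pi_1(a \mid s)$, treat $g = f^{w_2-w_1}$ as a reweighting of the new reference $\pi_1$, establish the one-step inequality $\mathbb{E}_{a\sim\pi_2(\cdot\mid s)}[Q_{\pi_1}(s,a)] \ge V_{\pi_1}(s)$, and finish with \Cref{lem:pit}. The step you single out as the main obstacle---that the weight is monotone in $A_{\hat\pi}(s,\cdot)$ while the covariance argument needs it to be comonotone with $Q_{\pi_1}(s,\cdot)$---is precisely the step the paper performs silently: its proof invokes \Cref{lem:imp_lemma} ``with $(\pi,\hat\pi)=(\pi_2,\pi_1)$,'' but the hypothesis of that lemma would require $\pi_2(a\mid s) = h(A_{\pi_1}(s,a))\,\pi_1(a\mid s)$ for a non-decreasing $h$, whereas what has been derived is a non-decreasing function of $A_{\hat\pi}(s,a)$. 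So your attempt is not a complete proof (you explicitly leave the crux open), but you have located the exact soft spot rather than glossing over it.

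Moreover, your worry is not a repairable technicality: for the true return $J$ the comonotonicity, and indeed the theorem's conclusion, can fail once there is more than one decision step. Consider a deterministic two-step MDP: from $s_0$, action $L$ (reward $0$) leads to $s_A$ and action $R$ (reward $0$) leads to $s_B$; at $s_A$ the two actions give rewards $10$ and $0$, at $s_B$ they give $9$ and $8$; take $\gamma=1$ over the episode (or any $\gamma$ near $1$), $\hat\pi$ uniform, and $f(x)=\min(e^x, e^6)$, which is bounded, non-negative, non-decreasing, and equals $e^x$ at every advantage that occurs. Then $A_{\hat\pi}(s_0,R)=1.75 > 0 > A_{\hat\pi}(s_0,L)$, so larger $w$ routes more mass toward $s_B$, even though once the leaf policies are sharpened the $s_A$ branch is worth $\approx 10$ versus $\approx 8.7$ for $s_B$: numerically $J(\pi_{w_1}) \approx 8.84$ for $w_1=0.4$ but $J(\pi_{w_2}) \approx 8.77$ for $w_2=1$, violating $J(\pi_1)\le J(\pi_2)$. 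At $s_0$ the weight $g(A_{\hat\pi}(s_0,\cdot))$ is anti-ordered with $Q_{\pi_1}(s_0,\cdot)$, which is exactly the failure mode you anticipated, and the one-step inequality needed for \Cref{lem:pit} fails there. What survives is the bandit/one-step case (where $Q_{\pi_1}(s,\cdot)$ and $Q_{\hat\pi}(s,\cdot)$ differ by a state-dependent constant, as you note), and the per-state surrogate statement that $\mathbb{E}_{a\sim\pi_w(\cdot\mid s)}[A_{\hat\pi}(s,a)]$ is non-decreasing in $w$, which does follow from \Cref{lem:chebyshev} because both the identity and $f^{w_2-w_1}$ are non-decreasing in $A_{\hat\pi}$. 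As a statement about $J$ in a general MDP, however, neither your plan nor the paper's proof can be completed without additional hypotheses.
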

\begin{proof}
Fix $s \in \gS$. As in the proof of \Cref{thm:reweight}, write
\begin{align}
    \pi_1(a \mid s) &= \frac{f(A_{\hat \pi}(s, a))^{w_1} \hat \pi(a \mid s)}{Z_1(s)}, \\
    \pi_2(a \mid s) &= \frac{f(A_{\hat \pi}(s, a))^{w_2} \hat \pi(a \mid s)}{Z_2(s)},
\end{align}
where $Z_1, Z_2: \gS \to \sR$ are the normalization functions.
Then, we have
\begin{align}
    \pi_2(a \mid s) = f(A_{\hat \pi}(s, a))^{w_2 - w_1} \frac{Z_1(s)}{Z_2(s)} \pi_1(a \mid s).
\end{align}
Since $Z_1$ and $Z_2$ are both bounded (which follows from the boundedness of $f$),
measurable, and non-negative,
we can apply \Cref{lem:imp_lemma} to the bounded, measurable, non-negative, non-decreasing function
$x \mapsto f(x)^{w_2 - w_1} Z_1(s)/Z_2(s)$ with $(\pi, \hat \pi) = (\pi_2, \pi_1)$ (in the notation of \Cref{lem:imp_lemma}).
The result then directly follows from \Cref{lem:pit} as before.
\end{proof}

\clearpage

\section{Additional results}

\begin{figure}[t!]
    \centering
    \includegraphics[width=\textwidth]{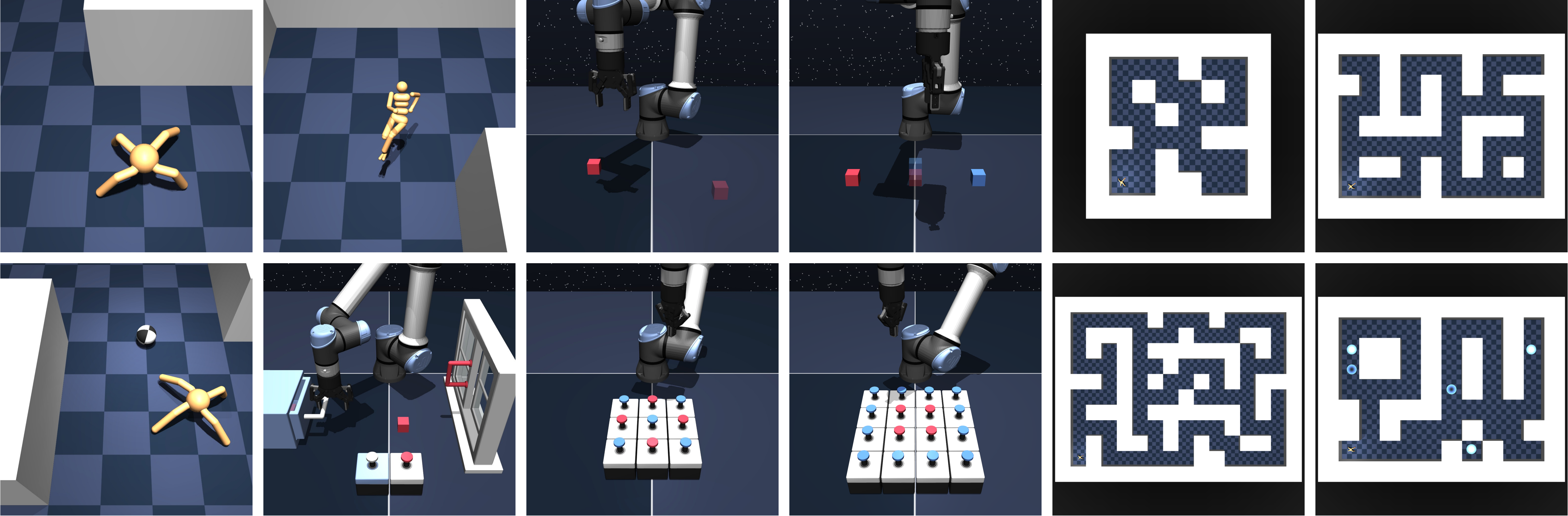}
    \caption{
    \footnotesize
    \textbf{OGBench environments.}
    }
    \label{fig:ogbench}
\end{figure}

\begin{figure}[t!]
    \centering
    \includegraphics[width=\textwidth]{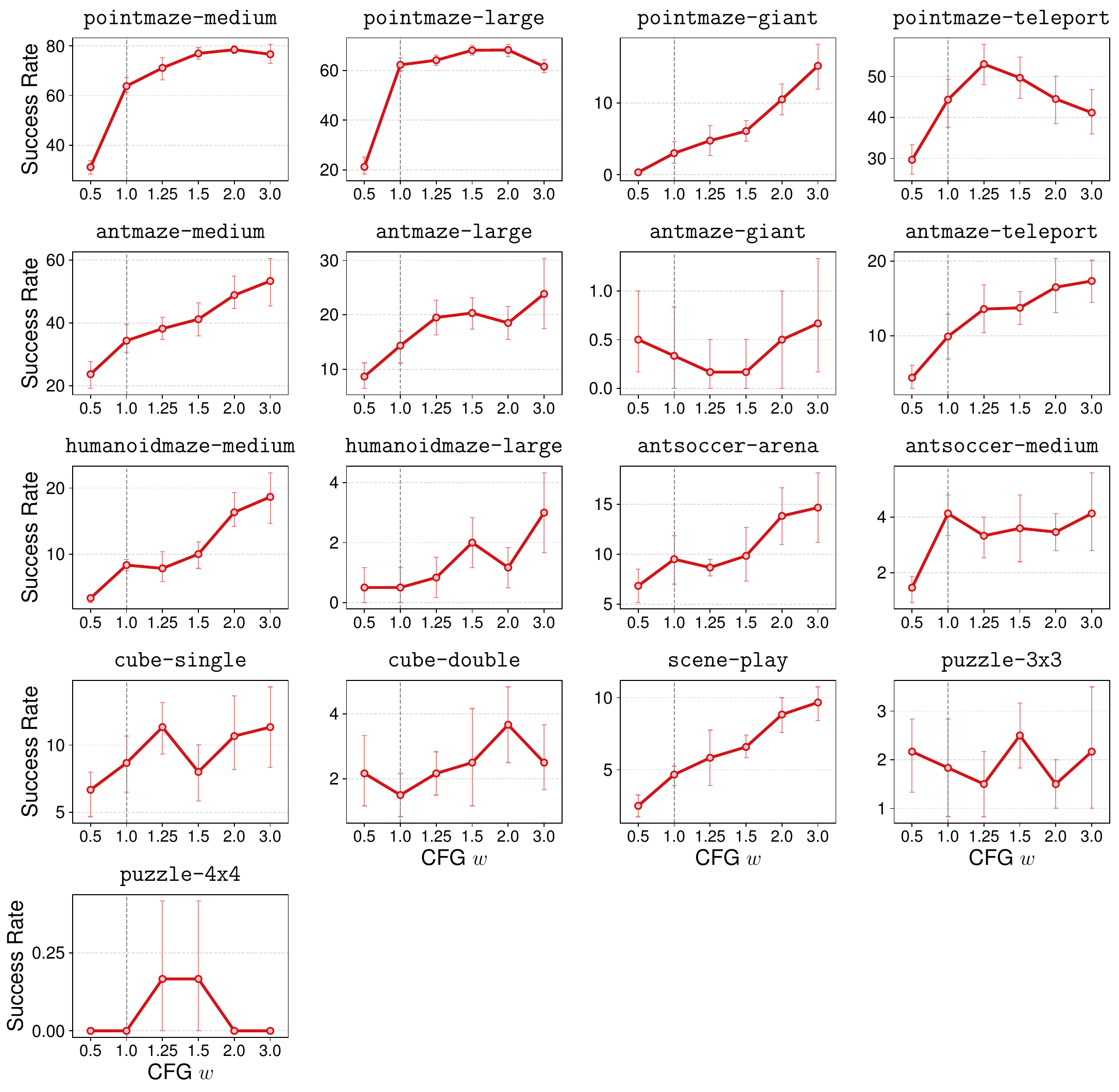}
    \caption{
    \footnotesize
    \textbf{Full ablation results on CFG weight $\bm{w}$.}
    The performance of CFGRL generally improves as the CFG weight increases.
    }
    \label{fig:gcbc_full}
\end{figure}

\begin{figure}[t!]
    \centering
    \includegraphics[width=\textwidth]{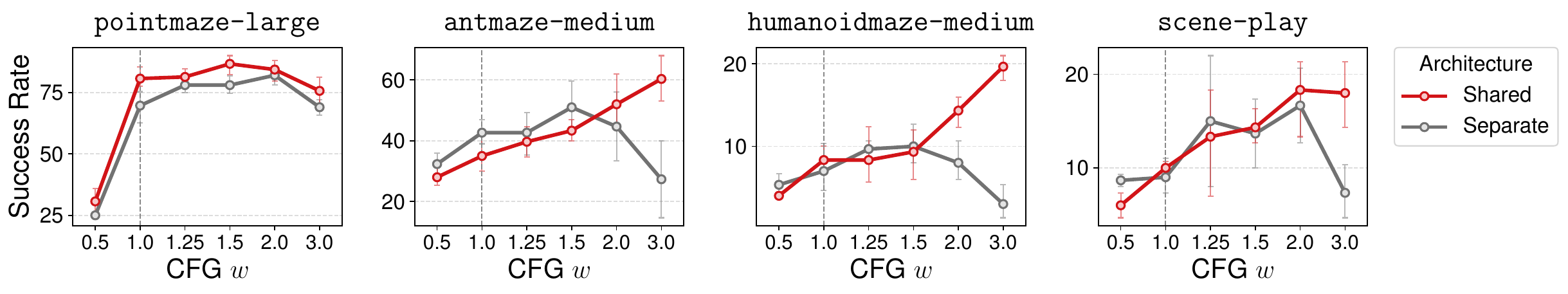}
    \caption{
    \footnotesize
    \textbf{Ablation study on optimality conditioning.}
    Shared policies lead to better performance and extrapolation than separate policies,
    likely because the former shares representations.
    }
    \label{fig:gcbc_arch}
\end{figure}

\textbf{Enviromments.} \Cref{fig:ogbench} illustrates OGBench tasks.

\textbf{Ablation study on the CFG weight $\bm{w}$.}
We present the full ablation study on the CFG weight $w$ across all $17$ state-based OGBench tasks in \Cref{fig:gcbc_full}.
The results show that the performance improves as the CFG weight increases,
although it sometimes declines beyond a certain point,
likely because the policy deviates too far from the data distribution.

\textbf{Ablation study on optimality conditioning.}
When modeling an optimality-conditioned policy $\pi(\pl{a} \mid \pl{s}, \pl{o})$ with $o \in \{\varnothing, 0, 1\}$,
we can either have separate networks for each $o$ value, or share the same network with a learnable optimality embedding.
We choose the latter in our experiments, and present an ablation study in \Cref{fig:gcbc_arch}.
The results suggest that the shared architecture generally works and extrapolates better than the separate one.
We believe this is likely because extrapolation benefits from shared representations.

\section{Implementation details}

We implement CFGRL on top of the reference implementations provided by OGBench~\citep{ogbench_park2025}.
Each experiment in this work takes no more than $4$ hours on a single A5000 GPU.
We provide our implementations in the supplemental material.

\textbf{Tasks.}
In \Cref{section:offline}, we employ $9$ tasks from the ExORL benchmark~\citep{exorl_yarats2022}
and $9$ single-task (\texttt{singletask}) variants from the OGBench suite~\citep{ogbench_park2025}.
We use the RND datasets for our ExORL experiments.
In \Cref{section:gcbc}, we employ the \texttt{oraclerep} variant of OGBench tasks
to remove confounding factors related to goal representation learning,
where this variant provides ground-truth goal representations
(e.g., in \texttt{antmaze}, a goal is specified by only the $x$-$y$ position, as opposed to the full $29$-dimensional state including proprioceptive information).

\textbf{Methods and hyperparameters.}
For baselines, we follow the original implementations and hyperparameters whenever possible~\citep{iql_kostrikov2022, ogbench_park2025, fql_park2025}.
For GCBC methods in \Cref{section:gcbc}, we sample goals uniformly from future states, as in the original implementation in OGBench~\citep{ogbench_park2025}.
This can be viewed as an approximation of geometric sampling with a high $\gamma$.
We present the full list of the hyperparameters in \Cref{table:hyp_offline_exorl,table:exorl_params,table:hyp_offline_ogbench,table:hyp_offline_ogbench_pertask,table:hyp_gcbc}.

\clearpage

\begin{table}[t]
\caption{
\footnotesize
\textbf{Hyperparameters for ExORL offline RL experiments (\Cref{table:offline_exorl}).}
}
\vspace{5pt}
\label{table:hyp_offline_exorl}
\begin{center}
\scalebox{0.8}
{
\begin{tabular}{ll}
    \toprule
    \textbf{Hyperparameter} & \textbf{Value} \\
    \midrule
    Learning rate & $0.0003$ \\
    Optimizer & Adam~\citep{adam_kingma2015} \\
    Gradient steps & $1000000$ \\
    Minibatch size & $1024$ \\
    MLP dimensions & $[512, 512, 512]$ \\
    Nonlinearity & Mish~\citep{mish_misra2020} \\
    Target network smoothing coefficient & $0.005$ \\
    Discount factor $\gamma$ & $0.99$ (default), $0.995$ (\texttt{antmaze-giant}, \texttt{humanoidmaze}, \texttt{antsoccer}) \\
    Flow steps & $32$ \\
    Flow time sampling distribution & $\mathrm{Unif}([0, 1])$ \\
    IQL expectile & $0.9$ \\
    CFGRL $w$ and AWR $1/\beta$ & \Cref{table:exorl_params} \\
    \bottomrule
\end{tabular}
}
\end{center}
\end{table} 

\begin{table}[t]
\caption{
\footnotesize
\textbf{Per-task hyperparameters for ExoRL offline RL experiments (\Cref{table:hyp_offline_exorl}).}
}
\vspace{5pt}
\label{table:exorl_params}
\begin{center}
\scalebox{0.8}
{
\begin{tabular}{lcc}
\toprule
\textbf{Task} & \textbf{AWR} $\bm{1/\beta}$ & \textbf{CFGRL} $\bm{w}$ \\
\midrule

\texttt{walker-stand} & $3$ & $30$ \\
\texttt{walker-walk} & $3$ & $30$ \\
\texttt{walker-run} & $10$ & $30$ \\
\texttt{quadruped-walk} & $3$ & $3$ \\
\texttt{quadruped-run} & $3$ & $10$ \\
\texttt{cheetah-run} & $30$ & $10$ \\
\texttt{cheetah-run-backward} & $3$ & $30$ \\
\texttt{jaco-reach-top-right} & $3$ & $3$ \\
\texttt{jaco-reach-top-left} & $3$ & $3$ \\

\bottomrule
\end{tabular}
}
\end{center}
\end{table}

\begin{table}[t]
\caption{
\footnotesize
\textbf{Hyperparameters for OGBench offline RL experiments (\Cref{table:offline_ogbench}).}
}
\vspace{5pt}
\label{table:hyp_offline_ogbench}
\begin{center}
\scalebox{0.8}
{
\begin{tabular}{ll}
    \toprule
    \textbf{Hyperparameter} & \textbf{Value} \\
    \midrule
    Learning rate & $0.0003$ \\
    Optimizer & Adam~\citep{adam_kingma2015} \\
    Gradient steps & $500000$ \\
    Minibatch size & $256$ \\
    MLP dimensions & $[512, 512, 512, 512]$ \\
    Nonlinearity & GELU~\citep{gelu_hendrycks2016} \\
    Target network smoothing coefficient & $0.005$ \\
    Discount factor $\gamma$ & $0.99$ (default), $0.995$ (\texttt{antmaze-giant}, \texttt{humanoidmaze}, \texttt{antsoccer}) \\
    Flow steps & $16$ \\
    Flow time sampling distribution & $\mathrm{Unif}([0, 1])$ \\
    IQL expectile & $0.9$ \\
    AWR $1/\beta$ and CFGRL $w$ & \Cref{table:hyp_offline_ogbench_pertask} \\
    \bottomrule
\end{tabular}
}
\end{center}
\end{table} 

\begin{table}[t]
\caption{
\footnotesize
\textbf{Per-task hyperparameters for OGBench offline RL experiments (\Cref{table:offline_ogbench}).}
}
\vspace{5pt}
\label{table:hyp_offline_ogbench_pertask}
\begin{center}
\scalebox{0.8}
{
\begin{tabular}{lcc}
\toprule
\textbf{Task} & \textbf{AWR} $\bm{1/\beta}$ & \textbf{CFGRL} $\bm{w}$ \\
\midrule

\texttt{pointmaze-large-navigate} & $10$ & $1$ \\
\texttt{pointmaze-teleport-navigate} & $1$ & $1$ \\
\texttt{antmaze-large-navigate} & $10$ & $1.25$ \\
\texttt{antmaze-teleport-navigate} & $10$ & $3$ \\
\texttt{humanoidmaze-large-navigate} & $3$ & $1$ \\
\texttt{antsoccer-arena-navigate} & $10$ & $1.5$ \\
\texttt{cube-single-play} & $1$ & $1.5$ \\
\texttt{scene-play} & $3$ & $3$ \\
\texttt{puzzle-3x3-play} & $1$ & $3$ \\

\bottomrule
\end{tabular}
}
\end{center}
\end{table}

\begin{table}[t]
\caption{
\footnotesize
\textbf{Hyperparameters for GCBC experiments (\Cref{table:gcbc}).}
}
\vspace{5pt}
\label{table:hyp_gcbc}
\begin{center}
\scalebox{0.8}
{
\begin{tabular}{ll}
    \toprule
    \textbf{Hyperparameter} & \textbf{Value} \\
    \midrule
    Learning rate & $0.0003$ \\
    Optimizer & Adam~\citep{adam_kingma2015} \\
    Gradient steps & $1000000$ \\
    Minibatch size & $1024$ (states), $256$ (pixels) \\
    MLP dimensions & $[512, 512, 512, 512]$ \\
    Nonlinearity & GELU~\citep{gelu_hendrycks2016} \\
    Image augmentation probability & $0.5$ \\
    Flow steps & $16$ \\
    Flow time sampling distribution & $\mathrm{Unif}([0, 1])$ \\
    CFGRL $w$ & $3$ \\
    Subgoal steps for hierarchical BC & $25$ (default), $10$ (OGBench manipulation), $50$ (\texttt{humanoidmaze}) \\
    \bottomrule
\end{tabular}
}
\end{center}
\end{table}

\end{document}